\def\@ACM@checkaffil{
    \if@ACM@instpresent\else
    \ClassWarningNoLine{\@classname}{No institution present for an affiliation}%
    \fi
    \if@ACM@citypresent\else
    \ClassWarningNoLine{\@classname}{No city present for an affiliation}%
    \fi
    \if@ACM@countrypresent\else
        \ClassWarningNoLine{\@classname}{No country present for an affiliation}%
    \fi
}
\definecolor{gold}{RGB}{221, 196, 65}
\definecolor{silver}{RGB}{215, 215, 215}
\definecolor{bronze}{RGB}{126, 66, 5}
\newcommand{\tikzcircle}[2][red,fill=red]{\tikz[baseline=-0.7ex]\draw[#1,radius=#2] (0,0) circle ;}
\definecolor{cvprblue}{rgb}{0.21,0.49,0.74}
\definecolor{ao}{rgb}{0.0, 0.5, 0.0}
  \providecommand\BibTeX{{%
    \normalfont B\kern-0.5em{\scshape i\kern-0.25em b}\kern-0.8em\TeX}}}
\begin{document}

\title{HOIN: High-Order Implicit Neural Representations}


\author{ Yang Chen \quad Ruituo Wu, \\ Yipeng Liu, \emph{Senior Member} \quad Ce Zhu, \emph{Fellow IEEE}}

\thanks{All the authors are with the School of Information and Communication Engineering, University of Electronic Science and Technology of China (UESTC), Chengdu, 611731, China. (e-mail: yangchen2023@std.uestc.edu.cn, wrt786842305@gmail.com, eczhu@uestc.edu.cn, yipengliu@uestc.edu.cn).}

\renewcommand{\shortauthors}{Wu and Wang, et al.}

\begin{abstract}
Implicit neural representations (INR)  suffer from worsening spectral bias, which results in overly smooth solutions to the inverse problem. To deal with this problem, we propose a universal framework for processing inverse problems called \textbf{High-Order Implicit Neural Representations (HOIN)}. By refining the traditional cascade structure to foster high-order interactions among features, HOIN enhances the model's expressive power and mitigates spectral bias through its neural tangent kernel's (NTK) strong diagonal properties, accelerating and optimizing inverse problem resolution. By analyzing the model's expression space, high-order derivatives, and the NTK matrix, we theoretically validate the feasibility of HOIN. HOIN realizes 1 to 3 dB improvements in most inverse problems, establishing a new state-of-the-art recovery quality and training efficiency, thus providing a new general paradigm for INR and paving the way for it to solve the inverse problem.
\end{abstract}

\keywords{Implicit Neural Representation, Inverse Problem, High-Order Feature Interaction.}

\begin{teaserfigure}
  \centering
  \includegraphics[width=\textwidth]{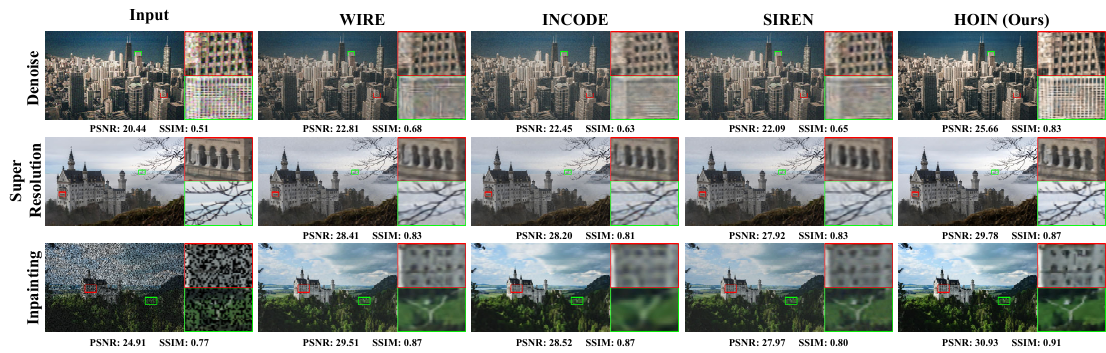}
  \caption{In this work, we propose a novel universal solution for inverse problems based on implicit neural representation (INR) - HOIN. Compared with traditional INR methods such as WIRE \cite{saragadam2023wire}, INCODE \cite{kazerouni2024incode}, and SIREN \cite{sitzmann2020implicit}, HOIN significantly improves the model's ability to perceive high-frequency information, effectively characterizes signal details, and achieves the best performance in a series of classic inverse tasks such as image denoise, super-resolution, and inpainting.}
  \Description{}
  \label{fig:teaser}
\end{teaserfigure}


\maketitle

\section{Introduction}

Convolutional Neural Networks (CNNs) effectively learn signals but struggle with high-frequency signals due to high impedance, i.e. spectral bias, which has become a significant challenge in signal processing. But deep image prior (DIP) \cite{ulyanov2018deep} takes advantage of spectral bias, successfully tackling image restoration tasks such as denoising, super-resolution, and other visual inverse challenges \cite{heckel2018deep,darestani2021accelerated,chakrabarty2019spectral}. DIP benefits from its independence from vast external data sets. However, its need for many training parameters and extended time frames limit practical use \cite{arican2022isnas,heckel2019denoising,ho2021neural}.

Implicit Neural Representation (INR) \cite{sitzmann2020implicit} refines signal modeling by integrating coordinate inputs with neural networks. By its structural advantages \cite{shabtay2022pip}, this method efficiently addresses inverse problems with reduced parameter count and processing time \cite{saragadam2023wire,kazerouni2024incode}. However, traditional INR methods can lead to worsening spectral bias, which results in overly smoothed solutions that omit vital high-frequency details \cite{saragadam2023wire}. New strategies have been introduced to solve this problem, such as adding an encoding layer that elevates coordinate inputs to higher-dimensional spaces \cite{tancik2020fourier,raghavan2023neural,singh2023polynomial,fathony2020multiplicative,mildenhall2021nerf} and utilizing periodic \cite{sitzmann2020implicit,liu2023finer,kazerouni2024incode} or non-periodic activation functions \cite{saragadam2023wire,ramasinghe2022gauss}. These modifications aim to fine-tune frequency responses automatically, mitigating the issue of spectral bias and enhancing the DIP process, thus getting more detailed outcomes.


However, the existing solutions still have challenges. On the one hand, They tend to be tailored for specific tasks \cite{saragadam2023wire}, needing more versatility for the broad spectrum of inverse problems. On the other hand, experimental evidence suggests that while these solutions may reduce spectral bias, they often fail to restore high-frequency details completely \cite{liu2023finer,kazerouni2024incode,saragadam2023wire}. During our experiments, as shown in Figure \ref{fig: Consolidated results}, we observed that hash coding \cite{muller2022instant,girish2023shacira}, despite its efficiency in eliminating spectral bias, inadvertently blends high-frequency noise with the signal when applied to inverse problems. This issue is incredibly challenging in tasks such as image denoising and deblurring. Therefore, there is an evident need for a solution universally applicable to all types of inverse problems and appropriately addresses spectral bias.

Incorporating high-order interaction structures \cite{wang2017sort,bu2021quadratic} into neural networks has dramatically expanded the hypothesis space, rapidly enhancing the ability to learn specific signal characteristics. This advancement is notably present in networks like Transformers \cite{vaswani2017attention} and Polynomial Neural Networks (PNNs) \cite{karras2019style,chrysos2021deep,xu2022quadralib}, which integrate multiplicative interactions and successfully address high-frequency signal processing challenges. Inspired by recent advancements, we diverge from traditional reliance on coding layers and activation functions, introducing an MLP block focused on higher-order feature interactions to present \textbf{High-Order Implicit Neural Representations for Inverse Problems (HOIN)}, a novel, generalized approach for tackling inverse problems through INRs. Our research shows that HOIN enhances the translational invariance and eigenvalue distribution in the Neural Tangent Kernels (NTK) \cite{jacot2018neural,choraria2022spectral} linked to INRs, thereby expanding the functional space of the model. HOIN significantly improves its capacity to mitigate spectral bias, excels at modeling high-frequency signals, and effectively minimizes noise interference.

To conclude, our contributions can be summarized as follows:

\begin{itemize}
    \item We propose a new high-order interaction block to mitigate the worsening spectral bias in INR.
    \item We propose a universal inverse problem-handling framework, the HOIN, that can apply INR to any inverse problem.
    \item We analyze the expression ability, higher-order derivatives, and NTK matrices of the higher-order blocks and theoretically prove the higher-order blocks' effectiveness.
    \item HOIN maintains the state-of-the-art (SOTA) performance in various models that use INR to solve inverse problems and representation tasks.
\end{itemize}


\begin{figure*}[!t]
	\centering
 	\includegraphics[width=1\textwidth,keepaspectratio]{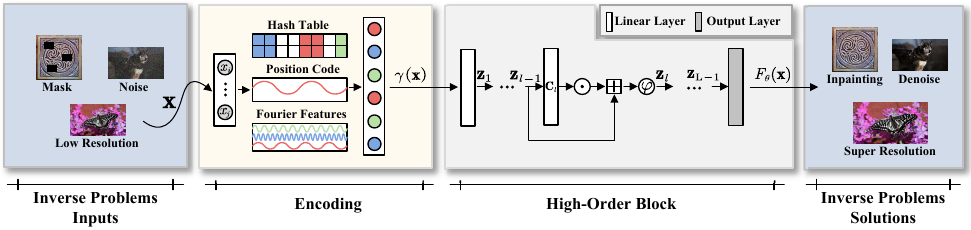}
	\caption{Overview of HOIN. We select the corresponding encoding layer based on the type of inverse problem, mapping the coordinate input $\mathbf{x}$ into a higher dimensional space $\gamma(\mathbf{x})$. Then, the low-frequency and high-frequency information in the signal is captured through a High-Order Block structure. During training, we find the peak performance point, stop the fitting process there, and find the solution $ F_{\mathbf{\theta }}(\mathbf{x})$. $\odot$ denotes Hadamard product, $\boxplus $ is addition, $\varphi$ is the nonlinear activation function.}
	\label{fig:Overview}
 \Description{}
\end{figure*}

\section{Background}

\subsection{Inverse Problem}
Solving inverse problems, which aim to reconstruct original signals from measurements, is crucial in critical applications like image restoration and sound source localization. Traditional methods for these problems often rely on existing knowledge, proposing solutions that meet certain conditions or combine an understanding of the target's structure with sparsity assumptions \cite{tibshirani1996regression, chambolle2004algorithm, baraniuk2010model, romano2017little}. However, These approaches face challenges in more complex situations. Deep learning methods introduce innovative solutions like deep image priors (DIP) \cite{ulyanov2018deep} and implicit neural representations (INR) \cite{sitzmann2020implicit}. Leveraging spectral deviation priors, INR can rapidly address and derive solutions for inverse problems from a single sample. This approach demands fewer parameters than methods based on convolutional networks and markedly abbreviates the training duration.

\subsection{Implicit Neural Representation Details}

INR \cite{sitzmann2020implicit} uses coordinate grids to approximate continuous signals, showing advantages in rendering, computational imaging, medical imaging, and virtual reality over traditional methods \cite{kuznetsov2021neumip,mildenhall2021nerf,chen2021learning}. Recently, INR's approach to solving inverse problems has gained notable attention \cite{saragadam2023wire}.

In an inverse problem, suppose coordinate inputs $\mathbf{x}\in \mathbb{R}^{D_i}$ corresponding to the clean signals $ S(\mathbf{x}):\mathbb{R}^{D_i} \mapsto \mathbb{R}^{D_o}$ and the noise signal $ N(\mathbf{x}):\mathbb{R}^{D_i} \mapsto \mathbb{R}^{D_o}$. For image, we have the coordinate input $(x_i,x_j)$, and the corresponding image $ S(\mathbf{x}) \in \mathbb{R}^{3\times H \times W}$. The noise signal can be modeled as
\begin{equation}
    N\left( \mathbf{x} \right) =S\left( \mathbf{x} \right) +\mathbf{n},
\end{equation}
where $\mathbf{n}$ is assumed to be $i.i.d.$ Gaussian Noise drawn from $\mathcal{N}\left(0, \sigma^2 \mathbf{I}\right)$ with $\mathbf{I}$ being the identity matrix.

INR parameterizes the clean signal $S(\mathbf{x})$ via a network $ F_{\mathbf{\theta }}(\mathbf{x}):\mathbb{R}^{D_i} \mapsto \mathbb{R}^{D_o}$ and is optimized to fit the noisy signal $ N(\mathbf{x})$, formulated as:

\begin{equation}
\theta ^*=\underset{\theta}{\mathrm{arg}\min}\mathcal{L} \left( N\left( \mathbf{x} \right) ;F_{\theta}(\mathbf{x}) \right) ,\quad S^*\left( \mathbf{x} \right) =F_{\theta ^*}(\mathbf{x}).
\end{equation}




Such parameterization allows lower-frequency contents to be fitted before the higher-frequency ones, exhibiting high impedance to signal noises or degradations. In practice, $\theta$ is usually learned using an MLP, and the overall network architecture of INR is as follows:

\begin{equation}
\begin{aligned}
 \mathbf{z}_0=&\gamma (\mathbf{x}), \\
 \mathbf{z}_l=&\varphi(\mathbf{C}_l \mathbf{z}_{l-1}) \\
 =&\varphi\left(\mathbf{W}_l \mathbf{z}_{l-1}+\mathbf{b}_l\right), l=1,2, \ldots, L-1, \\
 F_{\mathbf{\theta }}(\mathbf{x})=&\mathbf{W}_L \mathbf{z}_{L-1}+\mathbf{b}_L,
\end{aligned}
\end{equation}
where $\displaystyle \mathbf{z}^{\:l}$ denotes the output of layer $l$, $\theta=\{W^{l}, \mathbf{b}^{\:l}\ |\ l=1,2,...,L-1\}$, $L$ is the number of layers, $\varphi$ is the nonlinear activation function, $\gamma (\cdot )$ is the coding layer. $\mathbf{C}_l$ is linear function with respect to  $\mathbf{z}_{l-1}$.

\subsection{Motivation}

In traditional approaches, INR can tackle inverse problems but is constrained by worsening spectral bias. This worsening spectral bias typically leads to excessively smooth solutions that lack crucial high-frequency details. Methods such as nonlinear activation functions and high-dimensional encoding have been implemented to mitigate this issue, but their effectiveness across a wide range of inverse problems is limited. Acknowledging these challenges, our goal is to conduct an in-depth analysis of the root causes of spectral bias and devise a solution strategy that is more universally applicable.

\section{HOIN: High-Order Implicit Neural Representations}


\subsection{Overview}


In this Section, we introduce High-Order Implicit Neural Representations for Inverse Problems (HOIN). As illustrated in Figure \ref{fig:Overview}, the HOIN framework is through the coding layer and the high-order interaction block stages. 1) The coding layer transforms the coordinate input of signals (e.g. audio, image, video, etc.) $\mathbf{x}$ into a high-dimensional space (Section \ref{sec:Encoding Layer}); 2) By utilizing various activation functions, the high-order interaction block facilitates complex interactions among features within this expanded space (Section \ref{sec:High-Order Interaction Block}). During training, we find the peak performance point and stop the fitting process there.



\subsection{Encoding Layer}\label{sec:Encoding Layer}


Using the encoding layer first in HOIN aims to alleviate spectral bias by mapping the signal coordinates to high-dimensional space, enhancing the model's ability to capture details \cite{tancik2020fourier,mildenhall2021nerf}. INR alleviates spectral bias by mapping the signal coordinates to high-dimensional space, enhancing the model's ability to capture details \cite{tancik2020fourier,mildenhall2021nerf}. In addressing inverse problems, the deployment of coding layers has become crucial. Essential coding methods include positional coding (Pos. Enc), Fourier features (FFN), and hash table mapping (InstantNGP). In our proposed HOIN framework, as shown in Figure \ref{fig:Overview}, we adopt the following specific encoding strategies based on different types of inverse problems:

\begin{itemize}
\item Hash Table \cite{muller2022instant}: 
\begin{equation}
\gamma(\mathbf{x})=\left(\bigoplus_{d=1}^{D_i} x_d \pi_d\right) \quad \bmod T,
\end{equation}
\item Position Encoding \cite{mildenhall2021nerf}: 
\begin{equation}
\gamma (\mathbf{x})=\left[\cos \left( 2\pi \sigma ^{j/m}\mathbf{x} \right) ,\sin \left( 2\pi \sigma ^{j/m}\mathbf{x} \right)\right] ^{\mathrm{T}}\, j=0, \ldots, m,
\end{equation}
\item Fourier Features \cite{tancik2020fourier}:
\begin{equation}
 \gamma(\mathbf{x})=[\cos (2 \pi \mathbf{B x}), \sin (2 \pi \mathbf{B x})]^{\mathrm{T}},
 \end{equation}
\end{itemize}
where $\oplus$ denotes the bit-wise XOR operation and $\pi_d$ are unique large prime numbers. $T$ is the size of the hash table. each entry in $\mathbf{B} \in \mathbb{R}^{m \times {D_i}}$ is sampled from $\mathcal{N}\left(0, \sigma^2\right)$, $m$ is the mapping size, and $\sigma$ is chosen for each task and dataset with a hyperparameter sweep. 

\subsection{High-Order Interaction Block}\label{sec:High-Order Interaction Block}

\subsubsection{\textbf{Rethinking Plain Block and Residual Block}}


To address the worsening spectral bias in inverse problems, past enhancements have mainly concentrated on refining the coding layer and activation function, overlooking the crucial role of the MLP architecture within the INR. This oversight leaves the cascade architecture of the MLP unexamined, which is instrumental in the root cause of the worsening spectral bias \cite{liu2023finer,yuce2022structured}. 

For the classic INR model, the Plain MLP Block is

\begin{itemize}
    \item Plain Block \cite{rahaman2019spectral}:
    \begin{equation}
    \mathbf{z}_l=\varphi(\mathbf{C}_l\mathbf{z}_{l-1}). 
    \end{equation}
\end{itemize}


The cascade effect observed in plain blocks is the primary cause of spectral bias in INR. This problem presents itself in two significant ways: Firstly, with an increase in the number of block layers, the vanishing gradient issue becomes more pronounced, making the training process more challenging \cite{rahaman2019spectral}. Secondly, using the ReLU activation function can lead to the loss of high-order signal derivatives, further intensifying spectral bias \cite{he2016deep}. 

Residual blocks featuring residual connections have been a method to improve gradient flow to deeper layers. The expression is as follows
\begin{itemize}
    \item Residual Block \cite{he2016deep}:
    \begin{equation}
    \mathbf{z}_l=\varphi(\left( \mathbf{I}+\mathbf{C}_l \right) \mathbf{z}_{l-1}).
    \end{equation}
\end{itemize}

However, the residual block continues to face challenges with worsening spectral bias and learning high-frequency information \cite{belfer2021spectral}. It has yet to improve the efficiency of processing inverse problems significantly.

\begin{figure*}[!t]
    \centering
    \subfloat[]
    {\includegraphics[width=0.72\textwidth]{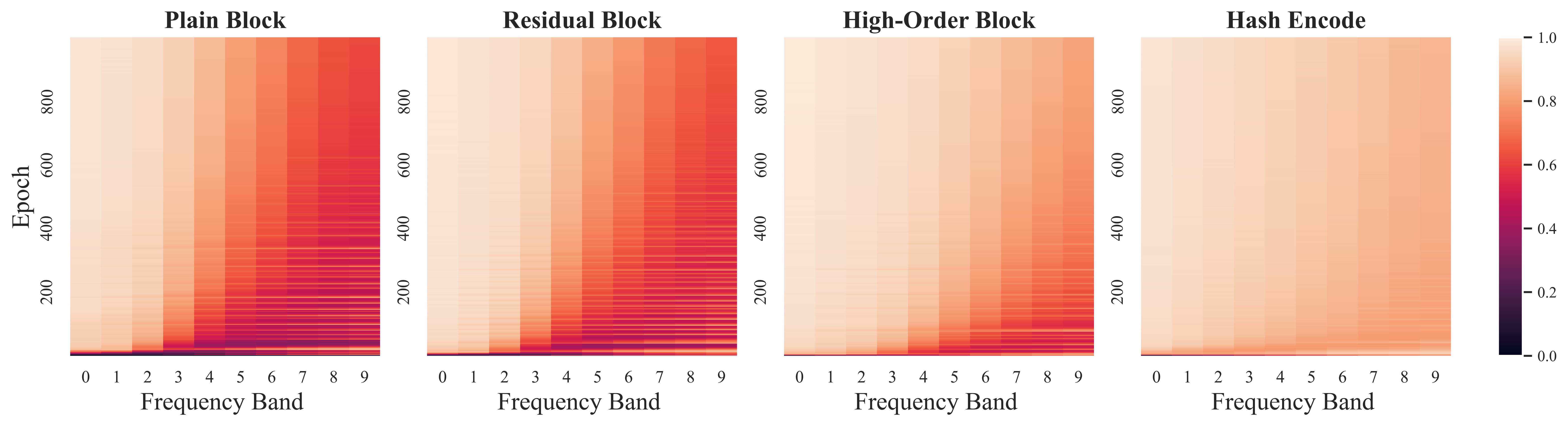}}
    \subfloat[]
    {\includegraphics[width=0.28\textwidth]{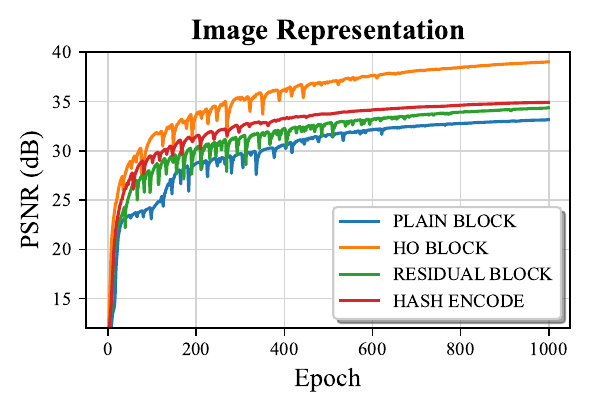}}
\caption{(a) Comparison of learning speeds at different frequencies. The target image is transformed into 10 frequency bands through the Fourier transform (x-axis, 0 represents the lowest frequency band), and we compare the learned components with the proper amplitude. On the color chart scale, 1 represents a perfect approximation. HO block can effectively alleviate spectral bias. Hash encoding does not exhibit spectral bias. (b) PSNR learning curves for different blocks. HO block maintains the highest PSNR.}
\label{fig: Consolidated results}
\Description{}
\end{figure*}

\subsubsection{\textbf{HO Block}}

Inspired by high-order interactions\cite{rao2022hornet,chrysos2023regularization,fan2023expressivity} in neural networks, we introduce a novel element into the MLP architecture of INR, which is called the \textbf{High-Order (HO) Block}. This addition aims to facilitate complex feature interactions at a higher level than traditional methods. This structure can be expressed in the following form:

\begin{equation}
\mathbf{z}_l=\varphi(\left( \mathbf{J}+\mathbf{C}_l\mathbf{z}_{l-1} \right) \odot\mathbf{z}_{l-1}),
\end{equation}
where $\odot$ denotes Hadamard product, $\mathbf{J}$ is the all-one matrix

We incorporate second-order interaction within the plain block by multiplying the previous layer's outputs with those of the current layer and then summing them up. This approach evolves into creating HO blocks through a hierarchical linkage, empowering the model to facilitate $2^{(L-1)/2}$-order feature interactions. Such an augmentation in high-order interactions diminishes the model's reliance on low-frequency learning. With increasing model depth, high-order blocks are designed to avoid the issue of gradient vanishing, enabling effective fitting of both high and low frequencies in the initial stages of training. This capacity allows for a swift alignment with the objective function of the real signal $G(\mathbf{x})$.


The HOIN framework tailors its approach to various inverse problems by selecting suitable encoding layers and activation functions to meet the specific demands of each task. An overly aggressive correction for spectral bias and the rapid acceleration of high-frequency learning might inadvertently blend noise with the signal, often detrimentally affecting the task. We introduced HO blocks to SIREN \cite{sitzmann2020implicit}, Pos.Enc \cite{mildenhall2021nerf}, and FFN \cite{tancik2020fourier}, creating \textbf{HO-SIREN}, \textbf{HO-Pos.Enc}, and \textbf{HO-FFN}, respectively. For particular inverse problem scenarios, we evaluate these models to identify the most effective one for deployment.

\section{Theoretical Analysis of HOIN}

In this section, we perform a theoretical analysis of HO blocks. In Sections \ref{sec:Expression ability}, \ref{sec:High-Order Derivative}, and \ref{sec:Neural Tangent Kernel}, we analyze the expressive ability, high-order derivatives, and NTK properties of various blocks.

\begin{figure*}[!t]
    \centering
    \subfloat[]
    {\includegraphics[width=0.68\textwidth]{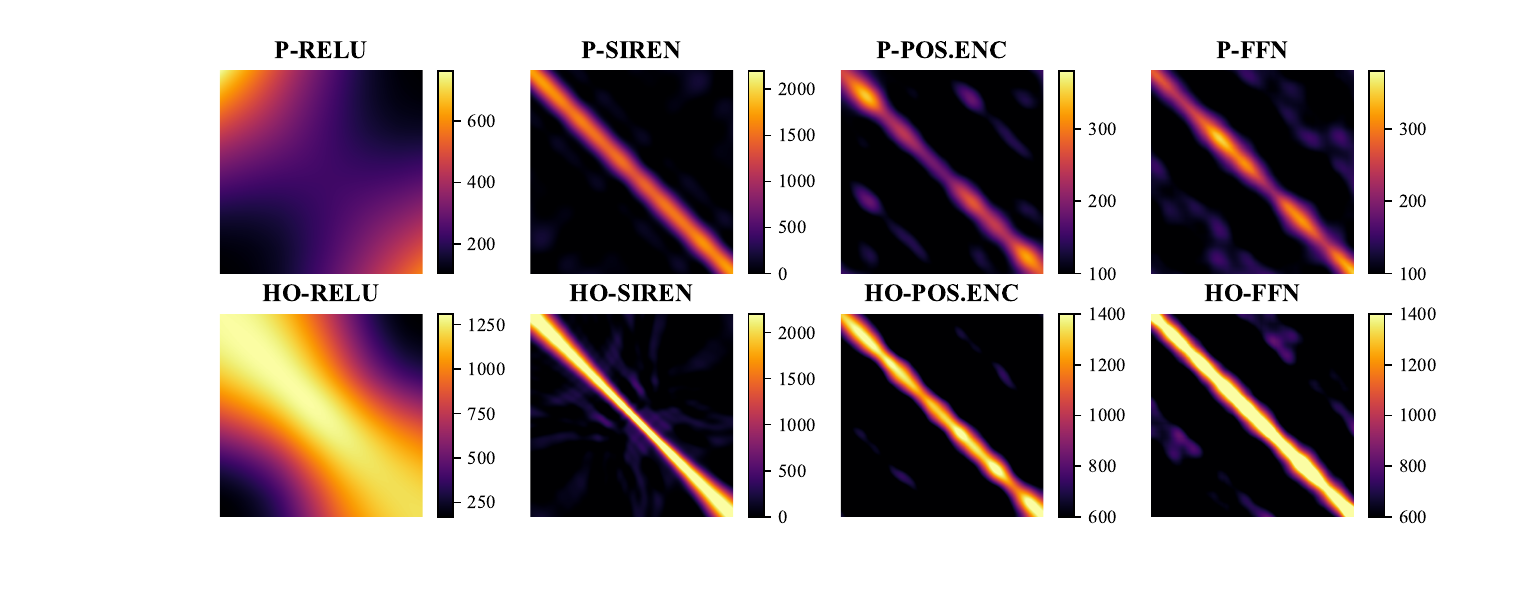}}
    \subfloat[]
    {\includegraphics[width=0.32\textwidth]{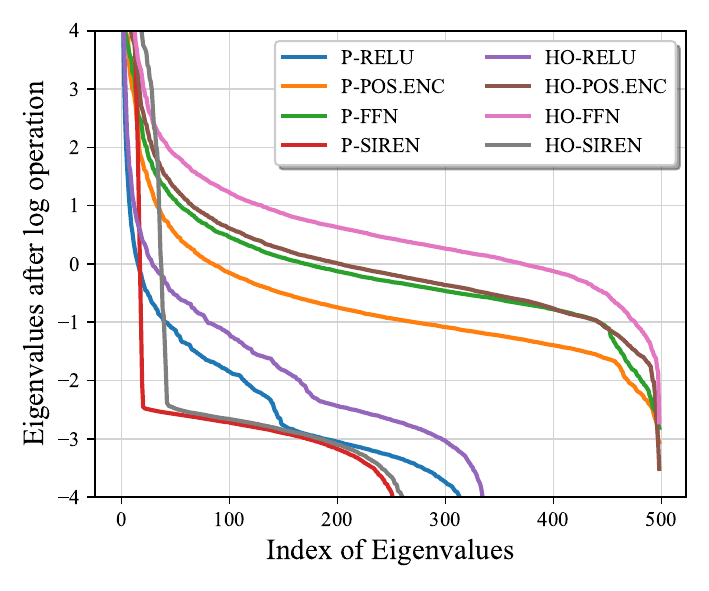}}
\caption{(a) Visualization of NTK and corresponding eigenvalues in different models. (b) Draw the corresponding feature values. Because the maximum eigenvalue is much larger than the minimum eigenvalue, all eigenvalues are processed by logarithmic functions for visualization. HO blocks significantly enhance the eigenvalues on the diagonal of the NTK matrix, thus enhancing the ability of the INR to capture high-frequency information. Plain, residual, and high-order blocks are abbreviated as P, R, and HO. }
\Description{}
\label{fig: ntk}
\end{figure*}

\subsection{Expression Ability Exploration }\label{sec:Expression ability}

In INR frameworks, the dimension of a network's functional space is a crucial metric for assessing the network's capacity for expression \cite{bu2021quadratic}. The architecture of the network is denoted by $\mathcal{D} =\left\{ D_1,...,D_l \right\}$, where $D_l$ indicates the number of neurons in the $l$-th layer. Any given activation function block can be decomposed into a series of polynomial functions with leading degree $r$ through Taylor approximation. This process helps understand how activation functions and network configurations influence INR models' functional capacity and expressiveness. 

For the network architecture $D_l$ with an activation function of leading degree $r$, we represent the leading functional space of the neural network as $\mathcal{F} _{\mathcal{D},r}$. The leading functional variants of plain Block, residual block, and HO block can be defined as the $Zariski$ $closure$ \cite{kileel2019expressive} of their leading functional space, i.e. $\mathcal{V} _{\mathcal{D},r}^{P}$,$\mathcal{V} _{\mathcal{D},r}^{R}$ and $\mathcal{V} _{\mathcal{D},r}^{HO}$ (similar to the ones presented in \cite{bu2021quadratic,kileel2019expressive}). Using these definitions, we have:

\begin{theorem}
\label{Theorem:efficiency}
For an activation function with leading degree $r \geq 1$ and network architecture $\mathcal{D} =\left\{ D_1,...,D_l \right\}$, the leading functional variety of Plain Block, $\mathcal{V} _{\mathcal{D} ,r}^{P}$, HO Block, $\mathcal{V} _{\mathcal{D} ,r}^{HO}$, and Residual Block, $\mathcal{V} _{\mathcal{D} ,r}^{R}$, satisfy:
\begin{equation}
    \mathcal{V} _{\mathcal{D} ,r}^{HO} = \mathcal{V} _{\mathcal{D} ,2r}^{R} = \mathcal{V} _{\mathcal{D} ,2r}^{P}.
\end{equation}

Proof. See Supplementary
\end{theorem}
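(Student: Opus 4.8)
The plan is to reduce everything to the top-degree homogeneous parts and then argue at the level of Zariski closures, handling the two claimed equalities separately. First I would Taylor-expand the activation and keep only its leading monomial, writing $\varphi(t)\sim t^r$; as in the framework of \cite{bu2021quadratic,kileel2019expressive}, the leading functional variety is the Zariski closure of the image of the parametrization map sending the weights to the top-degree homogeneous part of $F_{\theta}$, so only these leading monomials matter and the bias terms together with the all-one matrix $\mathbf{J}$, being strictly lower order, can be discarded.

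Second, I would pin down the degree recurrences. For a Plain or Residual block with leading degree $s$, the argument of $\varphi$ is affine in $\mathbf{z}_{l-1}$, so its leading degree equals $\deg\mathbf{z}_{l-1}=d_{l-1}$ and the activation promotes it to $d_l=s\,d_{l-1}$, giving $d_l=s^l$. For the HO block the componentwise argument is $(\mathbf{z}_{l-1})_i+(\mathbf{C}_l\mathbf{z}_{l-1})_i(\mathbf{z}_{l-1})_i$, whose leading part is the self-Hadamard term $(\mathbf{C}_l\mathbf{z}_{l-1})_i(\mathbf{z}_{l-1})_i$ of degree $2d_{l-1}$; the degree-$r$ activation then yields $d_l=2r\,d_{l-1}=(2r)^l$. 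Hence HO-$r$ and Plain/Residual-$2r$ carry the identical degree sequence across all layers for the same widths $\mathcal{D}$, which is the structural fact that makes the comparison possible. The equality $\mathcal{V}^{R}_{\mathcal{D},2r}=\mathcal{V}^{P}_{\mathcal{D},2r}$ is then the easy one: the residual and plain parametrizations differ only by the substitution $\mathbf{W}_l\mapsto\mathbf{W}_l-\mathbf{I}$, which is a bijection of the weight space, so the two parametrization maps have exactly the same image and therefore the same closure; I would dispatch this first.

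The remaining equality $\mathcal{V}^{HO}_{\mathcal{D},r}=\mathcal{V}^{P}_{\mathcal{D},2r}$ is the crux and where I expect the real difficulty. Writing the per-layer leading maps explicitly, HO-$r$ produces components of the form $(\mathbf{z}_{l-1})_i^{\,r}\,\ell_i(\mathbf{z}_{l-1})^{r}$ with $\ell_i$ an arbitrary linear form, whereas Plain-$2r$ produces $\ell_i(\mathbf{z}_{l-1})^{2r}$; these are genuinely different families of degree-$2r$ forms at a single layer, so equality can only hold for the end-to-end closure. My plan is to prove it by a dimension argument rather than a direct reparametrization: show one inclusion — that every top-degree HO map lies in $\mathcal{V}^{P}_{\mathcal{D},2r}$, using that both are built from the same degree sequence and that the structured factor is reachable in the Zariski closure of the degree-$2r$ power maps — then compute the rank of the HO parametrization's Jacobian at a generic weight to show its image has dimension equal to $\dim\mathcal{V}^{P}_{\mathcal{D},2r}$, and finally invoke the irreducibility of $\mathcal{V}^{P}_{\mathcal{D},2r}$ from \cite{kileel2019expressive}: a closed subvariety of full dimension inside an irreducible variety equals it.

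The hard part will be verifying that inclusion together with the generic-rank computation for the constrained HO parametrization. The self-Hadamard product forces the extra monomial factor $(\mathbf{z}_{l-1})_i^{\,r}$ at every layer, and one must show this constraint neither pushes the HO maps outside the degree-$2r$ variety nor drops the dimension below that of the unconstrained family; quantifying how the factored form interacts with the arbitrary linear mixing available in subsequent layers is the step I expect to require the most care.
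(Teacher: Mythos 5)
Your reductions are set up correctly, and your treatment of $\mathcal{V}^{R}_{\mathcal{D},2r}=\mathcal{V}^{P}_{\mathcal{D},2r}$ via the weight-space bijection $\mathbf{W}_l\mapsto\mathbf{W}_l-\mathbf{I}$ is clean and in fact more explicit than what the paper offers for that equality. But your proposal stops short of a proof at exactly the step you yourself flag as the crux: the equality $\mathcal{V}^{HO}_{\mathcal{D},r}=\mathcal{V}^{P}_{\mathcal{D},2r}$. You say you would (i) prove the inclusion of the composed HO leading maps in $\mathcal{V}^{P}_{\mathcal{D},2r}$, (ii) compute the generic rank of the Jacobian of the constrained HO parametrization to show its image has full dimension, and (iii) conclude by irreducibility. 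None of (i)--(iii) is actually carried out, and (ii) is genuinely hard: the per-coordinate divisibility of the HO forms $(\mathbf{z}_{l-1})_i^{\,r}\,\ell_i(\mathbf{z}_{l-1})^{r}$ by $(\mathbf{z}_{l-1})_i^{\,r}$ confines the single-layer family to a proper linear subspace of the degree-$2r$ forms, so neither the inclusion nor the dimension count is automatic. As written, this is a plausible strategy, not a proof.

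The comparison with the paper is instructive. The paper argues layer-wise: its supplementary Proposition asserts that the quadratic term $(\mathbf{C}_l\mathbf{z}_{l-1})\odot\mathbf{z}_{l-1}$ of a linearly activated HO block already has \emph{filling} leading functional space, i.e.\ equals $\mathrm{Sym}_2(\mathbb{R}^{D_i})^{D_i}$; it then lifts this through a degree-$r$ activation via the Kronecker-power identity $\mathcal{V}^{HO}_{(D_i),r}=\bigl(\mathcal{V}^{HO}_{(D_i),1}\bigr)^{\otimes r}=\mathcal{V}^{P}_{(D_i),2r}$ and composes across layers. Your own observation --- that the $i$-th HO coordinate is forced to be a multiple of $(\mathbf{z}_{l-1})_i$, so the single-layer HO family is strictly smaller than the full space of quadratics --- is precisely the point the paper's Proposition glosses over: the set of tuples $\bigl(z_i\,\ell_i(z)\bigr)_i$ is a proper linear (hence Zariski-closed) subspace of $\mathrm{Sym}_2(\mathbb{R}^{D_i})^{D_i}$ whenever $D_i>1$, so a coordinate-wise filling claim cannot hold at a single layer, and the theorem, if true, must be an end-to-end statement about closures of the composed parametrization, as you say. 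So your skepticism identifies a real weakness in the paper's own argument, but your proposal does not repair it: without the inclusion and the generic-rank computation (for instance, exhibiting an explicit weight configuration where the HO Jacobian attains $\dim\mathcal{V}^{P}_{\mathcal{D},2r}$, using the dimension formulas of \cite{kileel2019expressive}), the central equality remains unproven on your route just as it is under-justified on the paper's.
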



Theorem \ref{Theorem:efficiency} posits that within an identical network architecture, HO block with a leading degree of $r$ and plain and residual block with a leading degree of $2r$ exhibit the same variation in homogeneous polynomial functions. This implies that HO block can support a diversity of leading functions, specifically $(2r)^{l-1}$ homogeneous polynomials, in contrast to neural networks of the same structure and activation function, which are limited to $r^{l-1}$. Consequently, the HO block possesses a more expansive expression space, representing a broader range of frequency signal components.

Besides, we give the frequency decay rates for different blocks as follows:
\begin{theorem} 
\label{Theorem:frequency decay}
For a single-layer MLP, the number of neurons is $d$. For frequency $k$, the frequency decay rate of the Plain and Residual Block is $k^{-d}$, while on the contrary, the frequency decay rate of the HO Block is $k^{{-d}/2}$.

Proof. See Supplementary
\end{theorem}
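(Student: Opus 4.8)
The plan is to read off the frequency-decay rate from the large-$k$ behavior of the Fourier coefficient of a single-layer output, and to show that the only structural difference among the three blocks is the \emph{degree in the input of the argument fed to} $\varphi$: affine for the plain and residual blocks, but quadratic for the HO block because of the self-multiplicative term $(\mathbf{C}_l\mathbf{z}_{l-1})\odot\mathbf{z}_{l-1}$. Writing the scalar output on the signal domain $\Omega\subset\mathbb{R}^{d}$ as $f(\mathbf{x})=\varphi(g(\mathbf{x}))$ and using the Fourier representation of $\varphi$ (for periodic activations such as in SIREN, its dominant mode $e^{i(\cdot)}$), the amplitude at frequency $k$ reduces to an oscillatory integral
\begin{equation}
\widehat{f}(k)=\int_{\Omega} e^{\,i\Phi_k(\mathbf{x})}\,\rho(\mathbf{x})\,d\mathbf{x},\qquad \Phi_k(\mathbf{x})=g(\mathbf{x})-k\cdot\mathbf{x},
\end{equation}
so that the decay rate is dictated by the stationary-phase order of $\Phi_k$.

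First I would treat the plain and residual blocks. Here $g$ is affine (the residual connection only replaces $\mathbf{C}_l$ by $\mathbf{I}+\mathbf{C}_l$, leaving the degree equal to one), hence $\Phi_k$ is affine and $\nabla\Phi_k$ vanishes nowhere once $k$ is away from the single resonant frequency. With no interior stationary point the integral is controlled by its boundary: integrating by parts once in each of the $d$ coordinates produces one factor $k_j^{-1}$ per coordinate, giving $|\widehat{f}(k)|\sim k^{-d}$. This is the claimed rate for the P and R blocks, and it makes precise the statement that these blocks suppress high frequencies.

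Next I would treat the HO block, where $g$ is a nondegenerate quadratic in $\mathbf{x}$, so $\Phi_k$ is quadratic with constant, invertible Hessian equal to $\operatorname{Hess}g$. For every large $k$ the phase now possesses an interior stationary point $\mathbf{x}_k=(\operatorname{Hess}g)^{-1}(k-\nabla g(0))$, and the method of stationary phase — equivalently the multidimensional Fresnel integral, or the van der Corput estimate with vanishing first but nonvanishing second derivative — yields a leading term of size $|\det\operatorname{Hess}g|^{-1/2}\,k^{-d/2}$. Because $k^{-d/2}$ dominates the $k^{-d}$ boundary contribution, the HO block decays at the advertised rate $k^{-d/2}$, a full factor of two slower in the exponent than P and R; this is exactly the degree-doubling of Theorem~\ref{Theorem:efficiency} re-expressed at the level of spectra, and it quantifies why HO retains high-frequency content.

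The main obstacle is the HO step: I must guarantee that $\operatorname{Hess}g$ is \emph{nonsingular} (otherwise the $k^{-d/2}$ leading term degenerates and van der Corput gives only an intermediate rate), and I must bound the stationary-phase remainder \emph{uniformly in} $k$ so that the $k^{-d/2}$ term genuinely sets the rate rather than a lower-order artifact. Secondary care is needed in reducing a general activation to its dominant oscillatory mode — one must check that the leading-degree-$r$ part controls the tail and that the readout weights and the finite domain $\Omega$ do not alter the exponent — and in matching the integration dimension to the stated neuron count $d$, which fixes the per-coordinate bookkeeping that turns the single-variable exponents $1$ and $1/2$ into the $d$-dimensional rates $k^{-d}$ and $k^{-d/2}$.
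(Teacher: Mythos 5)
There is a genuine gap, and it starts with what is being measured. The paper's own ``proof'' is purely by citation: the $k^{-d}$ rate for plain/residual blocks is taken from \cite{belfer2021spectral} and the $k^{-d/2}$ rate for the HO block from \cite{choraria2022spectral}. Both of those results concern the \emph{eigenvalue decay of the neural tangent kernel} --- a dot-product kernel expanded in spherical harmonics on $S^{d-1}$ --- which is what governs how fast gradient descent fits the frequency-$k$ component of a target (i.e.\ spectral bias, which is how the paper uses the theorem immediately afterwards). Your oscillatory-integral computation instead bounds the Fourier coefficients $\widehat{f}(k)$ of one realized network function at fixed parameters. That is a statement about representational content, not about learning dynamics, and the two do not coincide: a function can contain little energy at frequency $k$ while its NTK still has large eigenvalues there, and vice versa. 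In particular, the $d$ in $k^{-d}$ in \cite{belfer2021spectral} enters through the dimension of spherical-harmonic spaces and the non-smoothness of the ReLU-type kernel at coinciding inputs, not through per-coordinate integration by parts over a $d$-dimensional domain; your bookkeeping reproduces the exponent by a different mechanism that happens to produce the same symbols, which is not a proof of the cited rates. Relatedly, your reduction of $\varphi$ to a ``dominant oscillatory mode'' excludes precisely the ReLU case for which the plain-block rate is established.

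Even taken on its own terms, the HO step fails structurally rather than just needing ``care.'' For a single-layer HO block the $i$-th output is $g_i(\mathbf{x})=x_i+x_i\,\mathbf{c}_i^{\top}\mathbf{x}$, whose Hessian is $e_i\mathbf{c}_i^{\top}+\mathbf{c}_i e_i^{\top}$, a matrix of rank at most $2$. So componentwise $\operatorname{Hess}g$ is singular whenever $d>2$, and the nondegenerate stationary-phase formula giving $|\det\operatorname{Hess}g|^{-1/2}k^{-d/2}$ simply does not apply; a degenerate stationary point yields a slower-than-$k^{-d/2}$ gain in general (van der Corput gives only the rank-determined exponent). You could try to rescue this with a scalar readout $\sum_i a_i g_i$ whose combined Hessian $\sum_i a_i\bigl(e_i\mathbf{c}_i^{\top}+\mathbf{c}_i e_i^{\top}\bigr)$ is generically invertible, but that genericity claim is exactly the missing lemma, it changes the object from the block to a block-plus-readout, and the theorem's $d$ is the neuron count rather than your integration dimension, so even a repaired stationary-phase argument would prove a differently-parameterized statement. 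If you want a self-contained proof matching the paper's intent, the route is the one in the cited works: expand the NTK of each block in Gegenbauer polynomials and track how the Hadamard self-interaction changes the kernel's singularity order, hence the eigenvalue decay exponent.
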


Theorem \ref{Theorem:frequency decay} suggests that HO blocks can capture more high-frequency information than plain blocks, facilitating a faster resolution of inverse problems.

We perform an image representation experiment to verify the phenomenon of spectral bias for different models. We employ an MLP model with three hidden layers, utilizing a ReLU activation function and positional encoding, to train on authentic natural images from the DIV2K \cite{timofte2018ntire} dataset. Our evaluation centers on the model's capacity to learn information across different frequency bands. Following the methodology outlined in \cite{shi2022measuring}, we partition the image spectrum into ten frequency bands and monitor the model's learning progress on these frequency bands during the training process. The experimental results are depicted in Figure \ref{fig: Consolidated results}, where darker red colors indicate weaker learning abilities of the model for those frequency bands. The findings suggest that the plain block struggles to learn high-frequency information in the image, even with positional encoding. Conversely, the HO block captures high-frequency features early in training, showcasing fast learning rates and excellent representation ability.

Additionally, upon introducing mainstream hash encoding representations, we observe its capability to learn balanced high-frequency and low-frequency information, exhibiting no spectral bias. Hash coding, a lattice-based interpolation method, learns low and high-frequency information. However, in inverse tasks, this leads to the blending of signal noise and high-frequency details, causing noise to be fitted early in the training, which is undesirable for inverse tasks. Specific experiments on this are conducted in the supplementary material.

\subsection{Derivative Analysis}\label{sec:High-Order Derivative}

In signal processing, first and second derivatives are pivotal for encapsulating rich high-frequency information. Spectral bias often arises because high-order derivatives gravitate towards zero during the learning process with plain and residual blocks, leading to a substantial loss of high-frequency details. The HO block is ingeniously designed to mitigate this issue. Specifically, the first derivative of the HO block $\nabla \mathbf{z}_l$ is
\begin{equation}
\nabla \mathbf{z}_l=\mathbf{C}_l\odot \mathbf{z}_{l-1}+\mathbf{C}_l\mathbf{z}_{l-1}+\mathbf{J}.
\end{equation}

The second derivative of the HO Block $\Delta \mathbf{z}_l$ is
\begin{equation}
\Delta \mathbf{z}_l=2\mathbf{C}_l.
\end{equation}

Furthermore, we compute the first-order derivatives of the plain and residual blocks, with the findings presented in Table \ref{tab: High-Order Derivative}. According to Table \ref{tab: High-Order Derivative}, the second derivative of plain blocks equals zero, indicating a significant loss of signal detail during processing. In contrast, the second derivative of the HO block remains constant, a property that significantly enhances its ability to capture detailed signal information. This characteristic of the HO block is instrumental in efficiently accelerating the resolution of inverse problems by preserving and leveraging high-frequency details often lost in traditional processing blocks.

\begin{table}[!t]
    \caption{First and second-order derivatives of different blocks. Plain, residual, and high-order blocks are abbreviated as P, R, and HO. $\mathbf{O}$ is an all-zero matrix.}
    \centering
    \begin{tabular}{ccc}
    \hline 
    \hline
        Block & $\nabla \mathbf{z}_l$ & $\Delta \mathbf{z}_l$\\ 
        \hline
        {P}     & $\mathbf{C}_l$ & $\mathbf{O}$\\
        {R}    & $\mathbf{C}_l+\mathbf{I}$ & $\mathbf{O}$\\
        {HO}  & $\mathbf{C}_l\odot \mathbf{z}_{l-1}+\mathbf{C}_l\mathbf{z}_{l-1}+\mathbf{J}$&$2\mathbf{C}_l$ \\
        \hline
        \hline
    \end{tabular}

    \label{tab: High-Order Derivative}
\end{table}

\begin{table*}[!t]
\caption{Results of image representation of different downsampling factors. Best 3 scores in each metric are marked with gold \tikzcircle[gold,fill=gold]{2pt}, silver \tikzcircle[silver,fill=silver]{2pt} and bronze \tikzcircle[bronze,fill=bronze]{2pt}.}
	\resizebox{\linewidth}{!}{
\begin{tabular}{lllllll|lllllll}
\hline\hline
\multicolumn{1}{l}{\multirow{2}{*}{Methods}} & \multicolumn{1}{l}{} & \multicolumn{2}{c}{$8\times$} & \multicolumn{3}{c}{$4\times$} & \multicolumn{1}{l}{} & \multicolumn{3}{c}{$2\times$}  & \multicolumn{3}{c}{$1\times$}  \\
\cmidrule(lr){3-4}\cmidrule(lr){5-7}\cmidrule(lr){9-11}\cmidrule(lr){12-14}
\multicolumn{1}{c}{}                         & \#Param $\downarrow$                & PSNR $\uparrow$      & SSIM $\uparrow$     & PSNR $\uparrow$  & SSIM $\uparrow$ & LPIPS $\downarrow$& \#Param $\downarrow$               & PSNR $\uparrow$  & SSIM $\uparrow$ & LPIPS $\downarrow$& PSNR $\uparrow$  & SSIM $\uparrow$ & LPIPS $\downarrow$\\
\hline
InstantNGP \cite{muller2022instant}                                    & 0.233                & 54.081     & 0.992     & 36.928 & 0.938 & 0.065 & 1.588                & 41.864 & 0.960 & 0.012 \tikzcircle[bronze,fill=bronze]{2pt} & 36.112 & 0.856 & 0.116 \\
WIRE \cite{saragadam2023wire}                                         & 0.437                & 54.392 \tikzcircle[bronze,fill=bronze]{2pt}     & 0.996 \tikzcircle[bronze,fill=bronze]{2pt}     & 38.809 & 0.947 & 0.036 & 0.889                & 42.335 & 0.963 & 0.012 \tikzcircle[silver,fill=silver]{2pt} & 35.037 & 0.895 & 0.111 \\
INCODE \cite{kazerouni2024incode}                                       & 0.207                & 50.697     & 0.989     & 39.236 & 0.918 & 0.057 & 1.029                & 42.771 \tikzcircle[bronze,fill=bronze]{2pt} & 0.967 & 0.013 & 36.723 & 0.882 & 0.108 \tikzcircle[bronze,fill=bronze]{2pt} \\
SIREN \cite{sitzmann2020implicit}                                        & 0.199 \tikzcircle[silver,fill=silver]{2pt}                 & 51.651     & 0.995     & 35.633 & 0.924 & 0.055 & 0.791 \tikzcircle[gold,fill=gold]{2pt}                & 41.478 & 0.965 & 0.015 & 33.505 & 0.872 & 0.155 \\
Pos. Enc \cite{mildenhall2021nerf}                                    & 0.204                & 31.016     & 0.901     & 31.322 & 0.872 & 0.136 & 0.805                & 36.789 & 0.939 & 0.046 & 32.954 & 0.884 & 0.159 \\
FFN \cite{tancik2020fourier}                                          & 0.329                & 46.031     & 0.986     & 38.609 & 0.960 & 0.047 & 1.314                & 41.801 & 0.973 & 0.013 & 34.206 & 0.916 & 0.122 \\
\hline
                                             \multicolumn{14}{c}{\textbf{Ours}}                                                                                                                       \\
                                             \hline
\textbf{HO-SIREN}                                     & 0.199 \tikzcircle[gold,fill=gold]{2pt}                & 59.199 \tikzcircle[gold,fill=gold]{2pt}     & 0.997 \tikzcircle[gold,fill=gold]{2pt}     & 41.060 \tikzcircle[silver,fill=silver]{2pt} & 0.981 \tikzcircle[silver,fill=silver]{2pt} & 0.031 \tikzcircle[bronze,fill=bronze]{2pt} & 0.794 \tikzcircle[silver,fill=silver]{2pt}                & 42.852 \tikzcircle[silver,fill=silver]{2pt} & 0.987 \tikzcircle[silver,fill=silver]{2pt} & 0.015 & 37.696 \tikzcircle[silver,fill=silver]{2pt} & 0.958 \tikzcircle[silver,fill=silver]{2pt} & 0.131 \\
\textbf{HO-Pos. Enc}                                 & 0.206 \tikzcircle[bronze,fill=bronze]{2pt}                & 44.638     & 0.991     & 40.034 \tikzcircle[bronze,fill=bronze]{2pt} & 0.978 \tikzcircle[bronze,fill=bronze]{2pt} & 0.024 \tikzcircle[silver,fill=silver]{2pt} & 0.805 \tikzcircle[bronze,fill=bronze]{2pt}                & 41.974 & 0.986 \tikzcircle[bronze,fill=bronze]{2pt} & 0.017 & 37.095 \tikzcircle[bronze,fill=bronze]{2pt} & 0.954 \tikzcircle[bronze,fill=bronze]{2pt} & 0.103 \tikzcircle[silver,fill=silver]{2pt} \\
\textbf{HO-FFN}                                      & 0.329                & 54.637 \tikzcircle[silver,fill=silver]{2pt}     & 0.997 \tikzcircle[silver,fill=silver]{2pt}     & 45.393 \tikzcircle[gold,fill=gold]{2pt} & 0.991 \tikzcircle[gold,fill=gold]{2pt} & 0.008 \tikzcircle[gold,fill=gold]{2pt} & 1.317                & 46.845 \tikzcircle[gold,fill=gold]{2pt} & 0.995 \tikzcircle[gold,fill=gold]{2pt} & 0.004 \tikzcircle[gold,fill=gold]{2pt} & 39.203 \tikzcircle[gold,fill=gold]{2pt} & 0.967 \tikzcircle[gold,fill=gold]{2pt} & 0.097 \tikzcircle[gold,fill=gold]{2pt}  \\
\hline\hline
\end{tabular}}
\label{tab: Results of Image Representation.}
\end{table*}

\subsection{Neural Tangent Kernel Perspective} \label{sec:Neural Tangent Kernel}
Our proposed HOIN method can effectively capture the high-frequency components of the signal. However, it is difficult to study this characteristic of spectral bias theoretically. The function constructed by the neural network is implicit, and its dependence on low-frequency component learning cannot be directly analyzed. Recently, some researchers have studied the learning process of neural networks through kernel function approximation  \cite{jacot2018neural}. The neural tangent kernel theory uses a first-order Taylor expansion of the model parameters $\theta$, that is:

\begin{equation}
    F_{\mathbf{\theta }}(\mathbf{x})\approx F_{\mathbf{\theta }_0}(\mathbf{x})+\left( \mathbf{\theta }-\mathbf{\theta }_0 \right) ^{\top}\nabla _{\mathbf{\theta }}F_{\mathbf{\theta }_0}(\mathbf{x}).
\end{equation}

When the width of the layer in $F_{\mathbf{\theta }}(\mathbf{x})$ is close to infinity, and the learning rate of the optimizer is close to 0, $F_{\mathbf{\theta }}(\mathbf{x})$ can converge to the kernel regression solution of the neural tangent kernel during the training process, i.e. the kernel function
\begin{equation}
\mathcal{K}_{\mathrm{NTK}}\left( \mathbf{x},\mathbf{x}^{'} \right) =\mathbb{E} _{\mathbf{\theta }\sim \mathcal{N}}\left. \langle \frac{F_{\mathbf{\theta }}(\mathbf{x})}{\partial \mathbf{\theta }},\frac{F_{\mathbf{\theta }}(\mathbf{x}^{'})}{\partial \mathbf{\theta }} \right. \rangle.
\end{equation}


By analyzing the eigenvalue distribution of the NTK kernel function, we can deeply understand the learning behavior of the neural network \cite{smola1998kernel,yuce2022structured}. When the larger eigenvalues of the kernel function are mainly concentrated in the diagonal area, the kernel function exhibits better translation invariance \cite{liu2023finer}. This structural property enables the model to learn signals more efficiently during training. In addition, when the feature value is larger, the model has a more vital high-frequency learning ability. This means that the model can respond more sensitively and learn high-frequency components in the signal.




We analyze the NTKs for various models, including activation functions like ReLU and SIREN (no encoding layers) and models that utilize encoding layers such as Pos. Enc and FFN coupled with a ReLU activation function. We visualize the NTK matrices for these models configured with Plain and high-order blocks. Unless otherwise specified, we all use three hidden layer networks to generate the NTK kernel matrix in the rest of this article.

As shown in Figure \ref{fig: ntk}(a), the kernel function of the plain block exhibits poor diagonal properties, leading to challenges in learning both low and high frequencies. Conversely, the HO block kernel matrix showcases significant diagonal eigenvalues, facilitating effective learning of low-frequency signals while concurrently capturing high-frequency signals. Furthermore, the HO block exhibits excellent diagonal properties and large feature values for SIREN, Pos. Enc, and FFN. This attribute is a crucial factor contributing to the successful characterization of high-frequency signals by these INRs. The model's diagonal width is further reduced upon integrating the high-order structure, and the eigenvalues are augmented. This advancement enhances the model's capacity to capture high-frequency information beyond the original model, facilitating nearly simultaneous learning of high-frequency and low-frequency information. Figure \ref{fig: ntk}(b) showcases the shift in the eigenvalue distribution, revealing a significant increase in the number of eigenvalues exceeding $10^1$ when utilizing HO blocks. This observation underlines the enhanced capability of the HOIN framework to learn high-frequency information effectively.

\begin{figure*}[!t]
	\centering
	\includegraphics[width=1.0\linewidth]{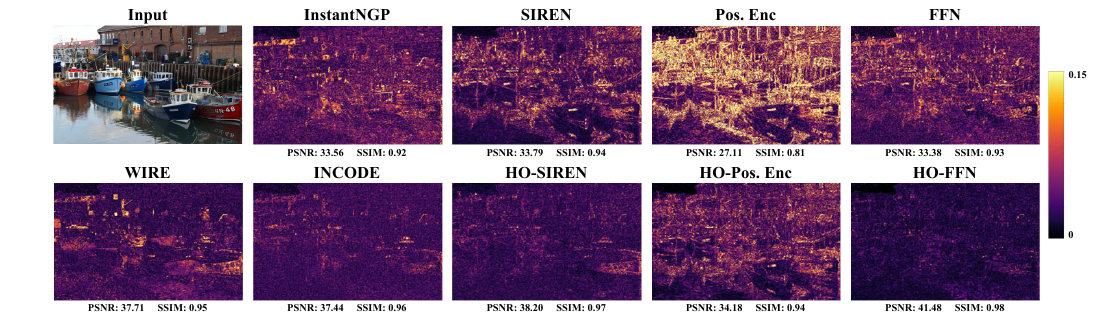}
	\caption{Visualization of Image Representation. Here, we demonstrate the representation errors of different models. The brighter areas indicate higher representation errors. HO-FFN accurately reconstructs all the detailed information of the image.}
	\label{fig: Visualization of Image Representation.}
 \Description{}
\end{figure*}

\section{Experiments}


In this Section, we conduct an extensive experimental evaluation of HOIN. Our experimental setup is detailed in Section \ref{sec:Experimental details and setup}. In Sections \ref{sec:Image Represention}, \ref{sec:Image Denoise}, \ref{sec:Image Super-Resolution}, \ref{sec:CT Reconstruction}, and \ref{sec:Image Inpainting}, we explore the application of the HOIN framework to specific image inversion tasks, including image representation, denoising, super-resolution, CT reconstruction, and image completion. Additional ablation studies and visualizations are provided in the supplementary experiments section for further insights into the effectiveness and operational mechanisms of HOIN.

\subsection{Experimental details and setup}\label{sec:Experimental details and setup}

For our experimental benchmarks, we select four models: WIRE \cite{saragadam2023wire}, SIREN \cite{sitzmann2020implicit}, Pos. Enc \cite{mildenhall2021nerf}, and FFN \cite{tancik2020fourier}. To these models, we integrate the HO block into SIREN, Pos. Enc, and FFN to identify the most effective configurations, collectively termed HOIN. We also add InstantNGP \cite{muller2022instant} and INCODE \cite{kazerouni2024incode}. The experimental scope includes tasks such as image representation (5000 epochs), image denoising (2000 epochs), image super-resolution (2000 epochs), CT image reconstruction (5000 epochs), and image completion (1000 epochs). The data for these tasks comprise randomly selected 10 images with dimensions of $1644\times2040\times3$ from the DIV2K \cite{timofte2018ntire} dataset. We set the evaluation metrics of the Peak Signal-to-Noise Ratio (PSNR) and Structural Similarity Index Measure (SSIM) \cite{wang2003multiscale}. Further experimental details are available in the supplementary materials.

\subsection{Image Represention}\label{sec:Image Represention}

Image representation can be viewed as a particular inverse problem. Its performance intuitively presents the model's improvement in spectral bias. In the experiment, we subject the images to various degrees of downsampling, namely $1\times$, $2\times$, $4\times$, and $8\times$, to cater to the diverse requirements of image representation at different downsampling rates. For the experiments involving $1\times$ and $2\times$ downsampling rates, the hidden layers are configured with 512 neurons each. Conversely, in the experiments with $4\times$ and $8\times$ downsampling, the number of neurons per hidden layer is set to 256. To maintain a fair comparison across all models, we adjust the parameter count of the InstantNGP model to align with the order of magnitude of the parameters in the other models. The outcomes of these experiments are presented in Table \ref{tab: Results of Image Representation.}.

Table \ref{tab: Results of Image Representation.} demonstrates that integrating the High-Order structure markedly enhances our model's capacity for high-frequency representation, yielding PSNR results compared to the baseline model. Notably, the HO-FFN model records the highest PSNR, registering approximately 8.1dB greater than the original FFN model. Figure \ref{fig: Visualization of Image Representation.} illustrates the error distribution during the reconstruction of $2\times$ downsampled images, revealing the HO-FFN model's near-complete reconstruction of high-frequency details, including edges and textures. For models like Pos. Enc, introducing HO structures can also effectively enhance their ability to represent high-frequency information.

\subsection{Image Denoise}\label{sec:Image Denoise}
In the image denoising experiment, to each image to assess the denoising abilities of the models, we add Gaussian noise with three noise levels, including $\sigma=10$, $\sigma=25$, and $\sigma=50$. Each model is set up with a hidden layer containing 256 neurons. The results of these experiments are detailed in the following Table \ref{tab: Results of Image Denoise}.

Table \ref{tab: Results of Image Denoise} reveals that the HO-Pos.Enc model, benefiting from the moderate acceleration provided by the HOIN framework in learning high-frequency information, exhibits superior performance across all denoising experiments. Networks utilizing the ReLU activation function have effectively learned low-frequency information, whereas the HOIN framework has demonstrated a significant advantage in acquiring high-frequency details. Furthermore, in line with previous analyses, excessive acceleration in high-frequency information learning by models such as InstantNGP and HO-FFN can result in the undesirable amalgamation of high-frequency noise with details. This conflation can detrimentally affect the outcome of denoising tasks. Detailed discussions and visualizations related to the denoising experiments are thoroughly presented in the supplementary.

\begin{table}[!t]
\caption{Image denoising results under different Gaussian noise $\sigma$.}
	\resizebox{\linewidth}{!}{
\begin{tabular}{llllllllll}
\hline\hline
\multirow{2}{*}{Methods} & \multicolumn{3}{c}{10} & \multicolumn{3}{c}{25} & \multicolumn{3}{c}{50} \\
\cmidrule(lr){2-4}\cmidrule(lr){5-7}\cmidrule(lr){8-10}
                         & PSNR $\uparrow$   & SSIM $\uparrow$  & LPIPS $\downarrow$ & PSNR $\uparrow$   & SSIM  $\uparrow$  & LPIPS $\downarrow$ & PSNR $\uparrow$    & SSIM $\uparrow$  & LPIPS $\downarrow$ \\
\hline
InstantNGP \cite{muller2022instant}               & 29.483 & 0.773 & 0.211 & 22.786 & 0.540 & 0.398 & 17.754 & 0.382 & 0.567 \\
WIRE \cite{saragadam2023wire}                   & 30.802 & 0.846 & 0.185 & 26.411 & 0.725 & 0.323 & 22.864 & 0.589 & 0.451 \\
INCODE \cite{kazerouni2024incode}                     & 30.478 & 0.811 & 0.196 & 25.113 & 0.644 & 0.357 & 21.841 & 0.509 & 0.489 \\
SIREN \cite{sitzmann2020implicit}                    & 31.140 & 0.849 & 0.169 & 26.748 \tikzcircle[bronze,fill=bronze]{2pt} & 0.734 \tikzcircle[bronze,fill=bronze]{2pt} & 0.309 & 23.407 & 0.625 & 0.426 \\
Pos. Enc \cite{mildenhall2021nerf}                & 27.271 & 0.736 & 0.281 & 25.756 & 0.685 & 0.306 \tikzcircle[bronze,fill=bronze]{2pt} & 23.459 \tikzcircle[silver,fill=silver]{2pt} & 0.643 \tikzcircle[silver,fill=silver]{2pt} & 0.414 \tikzcircle[silver,fill=silver]{2pt} \\
FFN \cite{tancik2020fourier}               & 31.131 & 0.852 & 0.180 & 26.335 & 0.716 & 0.317 & 22.805 & 0.583 & 0.453 \\
\hline
\multicolumn{10}{c}{\textbf{Ours}}                                                                           \\
\hline
\textbf{HO-SIREN}                 & 32.338 \tikzcircle[silver,fill=silver]{2pt} & 0.896 \tikzcircle[silver,fill=silver]{2pt} & 0.144 \tikzcircle[silver,fill=silver]{2pt} & 27.043 \tikzcircle[silver,fill=silver]{2pt} & 0.766 \tikzcircle[silver,fill=silver]{2pt} & 0.300 \tikzcircle[silver,fill=silver]{2pt} & 23.451 \tikzcircle[bronze,fill=bronze]{2pt} & 0.642 \tikzcircle[bronze,fill=bronze]{2pt} & 0.425 \tikzcircle[bronze,fill=bronze]{2pt} \\
\textbf{HO-Pos. Enc}             & 32.452 \tikzcircle[gold,fill=gold]{2pt} & 0.910 \tikzcircle[gold,fill=gold]{2pt} & 0.123 \tikzcircle[gold,fill=gold]{2pt} & 27.561 \tikzcircle[gold,fill=gold]{2pt} & 0.801 \tikzcircle[gold,fill=gold]{2pt} & 0.264 \tikzcircle[gold,fill=gold]{2pt} & 23.858 \tikzcircle[gold,fill=gold]{2pt} & 0.677 \tikzcircle[gold,fill=gold]{2pt} & 0.390 \tikzcircle[gold,fill=gold]{2pt} \\
\textbf{HO-FFN}            & 32.057 \tikzcircle[bronze,fill=bronze]{2pt} & 0.882 \tikzcircle[bronze,fill=bronze]{2pt} & 0.152 \tikzcircle[bronze,fill=bronze]{2pt} & 26.596 & 0.726 & 0.310 & 22.990 & 0.574 & 0.447 \\
\hline\hline
\end{tabular}}
\label{tab: Results of Image Denoise}
\end{table}

\subsection{Image Super-Resolution}\label{sec:Image Super-Resolution}
In our image super-resolution experiment, we initially downsample the original images by 2, 4, 6, and 8 factors. These downsampled images are used in the training phase to leverage the inherent interpolation capabilities of INR. Subsequently, in the testing phase, we aim to restore them to their original dimensions. The comprehensive results of these experiments are meticulously documented in Table \ref{tab: Results of Image Super-Resolution}, showcasing the effectiveness of our approach in enhancing image super-resolution.

As shown in Table \ref{tab: Results of Image Super-Resolution}, the HOIN framework markedly enhances the performance across all evaluated models. Notably, HO-SIREN exhibits outstanding PSNR and SSIM metrics across most super-resolution tasks. In contrast, due to its intrinsic methodology, the InstantNGP model, which relies on hash table indexes for reconstruction, proves less adept for pixel-aligned super-resolution tasks. Additional visualization results and detailed analyses are available in the supplementary.

\begin{table}[!t]
\caption{Results of Image Super-Resolution.}
	\resizebox{\linewidth}{!}{
\begin{tabular}{lllllllll}
\hline\hline
\multicolumn{1}{c}{\multirow{2}{*}{Methods}} & \multicolumn{2}{c}{$\times2$} & \multicolumn{2}{c}{$\times4$} & \multicolumn{2}{c}{$\times6$} & \multicolumn{2}{c}{$\times8$} \\
\cmidrule(lr){2-3}\cmidrule(lr){4-5}\cmidrule(lr){6-7}\cmidrule(lr){8-9}
\multicolumn{1}{c}{}                         & PSNR $\uparrow$        & SSIM $\uparrow$         & PSNR $\uparrow$        & SSIM $\uparrow$        & PSNR $\uparrow$        & SSIM $\uparrow$        & PSNR $\uparrow$        & SSIM $\uparrow$        \\
\hline
InstantNGP \cite{muller2022instant}                                   & 19.74       & 0.400       & 16.42       & 0.203       & 16.44       & 0.212       & 16.35       & 0.246       \\
WIRE \cite{saragadam2023wire}                                         & 31.50       & 0.846       & 29.09 \tikzcircle[bronze,fill=bronze]{2pt}       & 0.786       & 26.77 \tikzcircle[bronze,fill=bronze]{2pt}       & 0.717       & 24.44       & 0.686       \\
INCODE \cite{kazerouni2024incode}                                       & 31.94       & 0.853       & 28.97       & 0.812       & 26.47       & 0.759       & 24.95 \tikzcircle[bronze,fill=bronze]{2pt}       & 0.694       \\
SIREN \cite{sitzmann2020implicit}                                        & 31.61       & 0.851       & 28.26       & 0.803       & 26.23       & 0.736       & 24.18       & 0.715       \\
Pos. Enc \cite{mildenhall2021nerf}                                    & 30.39       & 0.805       & 24.26       & 0.745       & 24.36       & 0.718       & 23.28       & 0.713       \\
FFN \cite{tancik2020fourier}                                         & 31.38       & 0.856       & 27.93       & 0.795       & 26.16       & 0.783 \tikzcircle[bronze,fill=bronze]{2pt}       & 24.49       & 0.728 \tikzcircle[bronze,fill=bronze]{2pt}       \\
\hline
\multicolumn{9}{c}{\textbf{Ours}}                                                                                                                                     \\
\hline
\textbf{HO-SIREN}                                     & 33.03 \tikzcircle[silver,fill=silver]{2pt}       & 0.898 \tikzcircle[silver,fill=silver]{2pt}       & 29.61 \tikzcircle[gold,fill=gold]{2pt}       & 0.854 \tikzcircle[gold,fill=gold]{2pt}       & 27.53 \tikzcircle[gold,fill=gold]{2pt}       & 0.815 \tikzcircle[gold,fill=gold]{2pt}       & 25.69 \tikzcircle[gold,fill=gold]{2pt}       & 0.771 \tikzcircle[gold,fill=gold]{2pt}       \\
\textbf{HO-Pos. Enc}                                 & 32.47 \tikzcircle[bronze,fill=bronze]{2pt}       & 0.876 \tikzcircle[bronze,fill=bronze]{2pt}       & 28.91       & 0.824 \tikzcircle[bronze,fill=bronze]{2pt}       & 26.43       & 0.762       & 24.78       & 0.720       \\
\textbf{HO-FFN}                                       & 33.10 \tikzcircle[gold,fill=gold]{2pt}      & 0.898 \tikzcircle[gold,fill=gold]{2pt}       & 29.30 \tikzcircle[silver,fill=silver]{2pt}       & 0.839 \tikzcircle[silver,fill=silver]{2pt}       & 27.30 \tikzcircle[silver,fill=silver]{2pt}       & 0.798 \tikzcircle[silver,fill=silver]{2pt}       & 25.44 \tikzcircle[silver,fill=silver]{2pt}       & 0.759 \tikzcircle[silver,fill=silver]{2pt}       \\
\hline\hline
\end{tabular}}
\label{tab: Results of Image Super-Resolution}
\end{table}

\begin{table}[!t]
\caption{CT reconstruction results from different angles.}
	\resizebox{\linewidth}{!}{
\begin{tabular}{lllllllll}
\hline\hline
\multicolumn{1}{c}{\multirow{2}{*}{Methods}} & \multicolumn{2}{c}{50} & \multicolumn{2}{c}{100} & \multicolumn{2}{c}{200} & \multicolumn{2}{c}{300} \\
\cmidrule(lr){2-3}\cmidrule(lr){4-5}\cmidrule(lr){6-7}\cmidrule(lr){8-9}
\multicolumn{1}{c}{}                         & PSNR $\uparrow$        & SSIM $\uparrow$         & PSNR $\uparrow$        & SSIM $\uparrow$        & PSNR $\uparrow$        & SSIM $\uparrow$        & PSNR $\uparrow$        & SSIM $\uparrow$        \\
\hline
InstantNGP \cite{muller2022instant}                                   & 17.56      & 0.569     & 18.65      & 0.662      & 20.59      & 0.743      & 22.21      & 0.795      \\
WIRE \cite{saragadam2023wire}                                         & 21.93      & 0.648     & 26.28      & 0.799      & 29.01      & 0.814      & 29.20      & 0.818      \\
INCODE \cite{kazerouni2024incode}                                       & 22.76      & 0.674     & 26.63      & 0.701      & 31.16      & 0.819      & 32.22 \tikzcircle[bronze,fill=bronze]{2pt}      & 0.861      \\
SIREN \cite{sitzmann2020implicit}                                        & 22.96      & 0.714     & 26.96      & 0.745      & 27.97      & 0.822      & 30.32      & 0.847      \\
Pos. Enc \cite{mildenhall2021nerf}                                    & 22.72      & 0.734     & 23.78      & 0.784      & 24.20      & 0.809      & 24.30      & 0.801      \\
FFN \cite{tancik2020fourier}                                          & 26.03      & 0.779 \tikzcircle[bronze,fill=bronze]{2pt}     & 30.17 \tikzcircle[bronze,fill=bronze]{2pt}      & 0.898      & 31.79 \tikzcircle[bronze,fill=bronze]{2pt}      & 0.925      & 32.03      & 0.936      \\
\hline
\multicolumn{9}{c}{\textbf{Ours}}                                                                                                                            \\
\hline
\textbf{HO-SIREN}                                     & 28.02 \tikzcircle[gold,fill=gold]{2pt}      & 0.866 \tikzcircle[silver,fill=silver]{2pt}     & 32.12 \tikzcircle[gold,fill=gold]{2pt}      & 0.932 \tikzcircle[silver,fill=silver]{2pt}      & 34.41 \tikzcircle[silver,fill=silver]{2pt}      & 0.963 \tikzcircle[gold,fill=gold]{2pt}      & 34.82 \tikzcircle[silver,fill=silver]{2pt}      & 0.968 \tikzcircle[gold,fill=gold]{2pt}      \\
\textbf{HO-Pos. Enc}                                 & 26.94 \tikzcircle[silver,fill=silver]{2pt}      & 0.906 \tikzcircle[gold,fill=gold]{2pt}     & 28.39      & 0.933 \tikzcircle[gold,fill=gold]{2pt}      & 28.79      & 0.944 \tikzcircle[bronze,fill=bronze]{2pt}      & 29.40      & 0.949 \tikzcircle[bronze,fill=bronze]{2pt}      \\
\textbf{HO-FFN}                                       & 26.85 \tikzcircle[bronze,fill=bronze]{2pt}      & 0.769     & 30.82 \tikzcircle[silver,fill=silver]{2pt}      & 0.912 \tikzcircle[bronze,fill=bronze]{2pt}      & 34.90 \tikzcircle[gold,fill=gold]{2pt}      & 0.962 \tikzcircle[silver,fill=silver]{2pt}      & 34.83 \tikzcircle[gold,fill=gold]{2pt}      & 0.961 \tikzcircle[silver,fill=silver]{2pt}      \\
\hline\hline
\end{tabular}}

\label{tab: Results of CT Reconstruction}
\end{table}

\begin{figure*}[!t]
	\centering
	\includegraphics[width=1\linewidth]{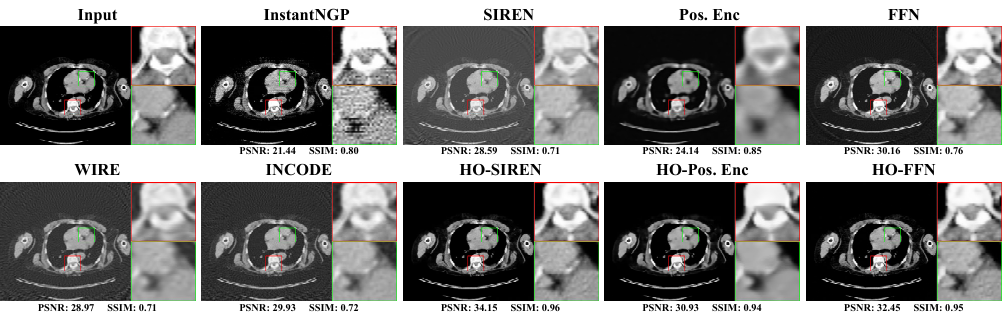}
	\caption{Visualization of Computed Tomography Reconstruction. Here, we demonstrate various methods for CT-based reconstruction of $256 \times 256$ images at 100 angles. HO-FFN maintains the best reconstruction results.}
	\label{fig: Visualization of Computed Tomography Reconstruction}
 \Description{}
\end{figure*}

\begin{figure}[!t]
	\centering
	\includegraphics[width=1\linewidth]{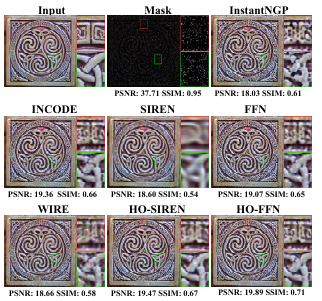}
	\caption{Visualization of Image Inpainting. Here, we only use 10\% of the original image's pixels for reconstruction. HO-SIREN effectively reconstructs detail levels such as texture edges.}
	\label{fig: Visualization of Image Inpainting.}
 \Description{}
\end{figure}

\subsection{CT Reconstruction}\label{sec:CT Reconstruction}
Our CT image reconstruction experiment utilizes 10 CT lung images from the publicly accessible lung nodule analysis dataset on Kaggle \cite{clark2013cancer}. To assess the efficacy of our model in CT reconstruction tasks, these images are downsampled to a resolution of $256\times256$. The experiment involves measuring reconstruction at four angles: 50, 100, 200, and 300. The findings of these experiments are comprehensively detailed in Table \ref{tab: Results of CT Reconstruction}.

CT reconstruction involves creating computational images from sensor measurements, with sparse CT reconstruction tackling the challenge of producing accurate images from limited measurement data. This challenge is primarily due to the difficulty in reconstructing images with scarce data. HO block substantially improves the quality of the reconstruction results by efficiently capturing high-frequency components throughout the reconstruction process. As detailed in Table \ref{tab: Results of CT Reconstruction}, the HOIN model outperforms others in all measurement scenarios, showcasing its superior performance. Figure \ref{fig: Visualization of Computed Tomography Reconstruction} illustrates that the HO-SIREN model is particularly adept at reconstructing images' texture and contour details. In comparison, the SIREN model, much like the WIRE and INCODE models, is prone to artifacts, whereas InstantNGP struggles with significant pixel loss issues.

\subsection{Image Inpainting}\label{sec:Image Inpainting}
In the image inpainting experiment, we select an image of a Celtic spiral knot with a resolution of $572 \times 582 \times 3$. The mask applied in this experiment is generated randomly, obscuring approximately $10\%$ of the image's pixel area. The architectural configuration for this experiment is aligned with that used in the image representation task, with the findings presented in Figure \ref{fig: Visualization of Image Inpainting.}.

Compared to existing SOTA methods based on INR, the HO-FFN model demonstrates considerable superiority in image inpainting tasks, particularly in accurately rendering details. To corroborate the efficacy of our approach in image completion tasks, additional relevant experiments are included in the supplementary for further examination.

\section{Conclusion}
In this paper, we propose \textbf{High-Order Implicit Neural Representations
for Inverse Problems (HOIN)}, an innovative framework for addressing inverse problems. By integrating high-order interaction blocks into INR, HOIN substantially enlarges the functional space of INR to enhance the model's capacity to capture high-frequency information. The NTK matrix associated with HOIN features notable diagonal and translational invariance, offering robust theoretical backing to mitigate spectral bias. Unlike alternative approaches, HOIN is adept at diminishing noise interference and swiftly and efficiently resolving inverse problems. Through comprehensive experiments, HOIN has been shown to outperform other models utilizing INR for inverse problem-solving and has also excelled in representation tasks.

\bibliographystyle{ACM-Reference-Format}
\bibliography{ArXiv}

\appendix
The supplementary material provides detailed theoretical analyses, proofs (see section \ref{sec: Theoretical Analysis}), and numerous additional experiments (see section \ref{Additional experiments and details} and \ref{Theoretical experimental verification}).

\section{Theoretical Analysis}\label{sec: Theoretical Analysis}
In this section, we detail the theory and proofs regarding the expression spaces and high-order derivatives of HOIN, as referenced in the main paper. 
\subsection{Expression Ability Exploration }\label{sec:sup_Expression ability}

In INR frameworks, the dimension of a network's functional space is a crucial metric for assessing the network's capacity for expression \cite{bu2021quadratic}. The architecture of the network is denoted by $\mathcal{D} =\left\{ D_1,...,D_l \right\}$, where $D_l$ indicates the number of neurons in the $l$-th layer. Any given activation function block can be decomposed into a series of polynomial functions with leading degree $r$ through Taylor approximation. This process helps understand how activation functions and network configurations influence INR models' functional capacity and expressiveness. 

For the network architecture $D_l$ with an activation function of leading degree $r$, we represent the leading functional space of the neural network as $\mathcal{F} _{\mathcal{D},r}$. The leading functional variants of plain Block, residual Block, and HO block can be defined as the $Zariski$ $closure$ \cite{kileel2019expressive} of their leading functional space, i.e., $\mathcal{V} _{\mathcal{D},r}^{P}$,$\mathcal{V} _{\mathcal{D},r}^{R}$ and $\mathcal{V} _{\mathcal{D},r}^{HO}$ (similar to the ones presented in \cite{bu2021quadratic,kileel2019expressive}) as follows

\begin{definition}
Suppose there is a neural network $\mathcal{D} =\left\{ D_1,...,D_l \right\}$ whose leading space satisfies the following condition:
\begin{equation}
    \mathcal{F}_{\mathcal{D},r}=\mathrm{Sym}_{r^{l-1}}\left( \mathbb{R} ^{D_0} \right) ^{D_l}.
\end{equation}
For a filling functional variety, its leading functional variety satisfies:
\begin{equation}
    \mathcal{V}_{\boldsymbol{d},r}=\overline{\mathcal{F}_{\mathcal{D} ,r}}=\overline{\mathrm{Sym}_{r^{l-1}}\left( \mathbb{R} ^{D_0} \right) ^{D_l}}.
\end{equation}
\label{def:leading}
\end{definition}

Thus, the function space or the varieties of the network do not have to completely occupy the ambient space of homogeneous polynomials. Instead, we only need to consider the space of homogeneous polynomials whose leading degrees are contained as being adequately filled.

\begin{proposition}
\label{pp:sup_single_layer_QRes}
For a single-layer network $\mathcal{D} =(D_l)$ utilizing a linearly activated $(r=1)$ High-Order (HO) Block, the network has a filling functional space of degree 2. That is, its leading functional space satisfies the following criteria:
\begin{equation}
    \mathcal{F} _{\mathcal{D} ,1}^{HO}=\mathrm{Sym}_2(\mathbb{R} ^{D_l})^{D_l}.
\end{equation}
\end{proposition}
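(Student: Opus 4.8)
The plan is to reduce the claim to a surjectivity statement about a single linear map between spaces of polynomials. First I would instantiate the HO block in the single-layer, linearly activated setting: with $\varphi=\mathrm{id}$ and $\mathbf{C}_1\mathbf{z}_0=\mathbf{W}\mathbf{z}_0+\mathbf{b}$, writing $\mathbf{x}=\mathbf{z}_0$ with coordinates $x_i$, the block output is $\mathbf{z}_1=(\mathbf{J}+\mathbf{W}\mathbf{x}+\mathbf{b})\odot\mathbf{x}$, so its $i$-th coordinate is $(\mathbf{z}_1)_i=(1+b_i)\,x_i+x_i\sum_{j}\mathbf{W}_{ij}x_j$. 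The constant-plus-bias prefactor contributes only degree-$\le 1$ terms, so the leading homogeneous degree-$2$ part of coordinate $i$ is the quadratic form $q_i(\mathbf{x})=x_i\sum_j\mathbf{W}_{ij}x_j$. Setting $\Lambda\colon \mathbf{W}\mapsto(q_1,\dots,q_{D_l})$ identifies the leading functional space $\mathcal{F}^{HO}_{\mathcal{D},1}$ with $\operatorname{im}\Lambda$.

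The key structural observation is that $\Lambda$ is \emph{linear} in $\mathbf{W}$. Consequently $\mathcal{F}^{HO}_{\mathcal{D},1}=\operatorname{im}\Lambda$ is already a linear subspace of $\mathrm{Sym}_2(\mathbb{R}^{D_l})^{D_l}$ and equals its own Zariski closure, so the proposition collapses to the linear-algebraic assertion that $\Lambda$ is onto. I would therefore attempt to exhibit, for every output slot $i$ and every monomial $x_px_q$, a weight $\mathbf{W}$ whose image is the tuple carrying $x_px_q$ in slot $i$ and $0$ elsewhere; producing all of these would give a spanning set of the target and establish the filling property.

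I expect this surjectivity step to be the main obstacle, and before grinding through explicit constructions I would run a dimension count as a sanity check: the domain has dimension $D_l^2$, whereas $\dim\mathrm{Sym}_2(\mathbb{R}^{D_l})^{D_l}=D_l\binom{D_l+1}{2}$. More sharply, each $q_i=x_i\sum_j\mathbf{W}_{ij}x_j$ is divisible by $x_i$, so the $i$-th coordinate of any element of $\operatorname{im}\Lambda$ is confined to $x_i$-multiples rather than ranging over all quadratics. Reconciling this Hadamard-induced divisibility constraint with the stated filling claim is the crux: the real work is to pin down the precise sense in which $\mathrm{Sym}_2$ is meant here --- for instance whether cross-coordinate or across-neuron contributions are to be folded in --- and then to certify dominance, e.g.\ by a Jacobian-rank computation of the parametrization at a generic point in the spirit of \cite{kileel2019expressive,bu2021quadratic}. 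Determining exactly which quadratic forms are reachable, and matching that set to the target space, is the step on which the whole proposition hinges.
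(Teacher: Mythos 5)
Your setup coincides with the paper's own starting point: the paper likewise rewrites the linearly activated block as $\mathbf{z}_l=\mathbf{z}_{l-1}+\left(\mathbf{C}_l\mathbf{z}_{l-1}\right)\odot\mathbf{z}_{l-1}$ and identifies the first summand as the linear part and the second as the leading quadratic part. But that is where the paper's proof \emph{ends}: it asserts that this quadratic term ``is a quadratic term with $\mathrm{Sym}_2(\mathbb{R}^{D_l})^{D_l}$'' and concludes directly from Definition~\ref{def:leading} that the functional space is filling. The surjectivity verification you isolate as the crux is never carried out; membership of the image in $\mathrm{Sym}_2(\mathbb{R}^{D_l})^{D_l}$ is silently promoted to equality with it.

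Moreover, your obstruction is genuine, so the gap you have uncovered lies in the paper's argument rather than in your refusal to finish. The parametrization $\mathbf{W}\mapsto\left(q_1,\dots,q_{D_l}\right)$ with $q_i=x_i\sum_j \mathbf{W}_{ij}x_j$ is linear in $\mathbf{W}$, hence its image is a linear subspace of dimension at most $D_l^2$ and is already Zariski closed, while $\dim\mathrm{Sym}_2(\mathbb{R}^{D_l})^{D_l}=D_l\binom{D_l+1}{2}>D_l^2$ whenever $D_l\ge 2$; concretely, a tuple whose $i$-th slot is $x_j^2$ with $j\ne i$ violates the divisibility-by-$x_i$ constraint and is unreachable, so the stated equality is false for every $D_l\ge 2$ (it holds only trivially at $D_l=1$). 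The claim can be rescued only under a reading the paper never makes explicit: in \cite{kileel2019expressive,bu2021quadratic} the functional space is that of a network \emph{including a final linear layer}, and indeed a single scalar output $\sum_i a_i\,x_i\ell_i(\mathbf{x})$ does sweep out all of $\mathrm{Sym}_2(\mathbb{R}^{D_l})$ (assign each monomial $c_{ij}x_ix_j$ to the cofactor $\ell_i$), whereas for the bare block, coordinatewise, equality cannot hold. So your proposal does not prove the proposition, but no proof of the literal statement exists; the step you decline to assert is precisely the paper's entire argument, and the defect propagates into Theorem~\ref{Theorem:sup_efficiency}, whose proof invokes this proposition layer by layer.
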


\begin{proof}
We can relate the linear HO Block to a quadratic polynomial regression. Consider a HO Blcok:
    \begin{equation}\label{eq:4}
        \begin{aligned}
	\mathbf{z}_l&=\left( \mathbf{J}+\mathbf{C}_l\mathbf{z}_{l-1} \right) \odot \mathbf{z}_{l-1}\\
	&=\mathbf{z}_{l-1}+\left( \mathbf{C}_l\mathbf{z}_{l-1} \right) \odot \mathbf{z}_{l-1},\\
\end{aligned}
    \end{equation}
Where $\mathbf{z}_l$ represents a primary linear term with $\mathrm{Sym}_1(\mathbb{R} ^{D_l})^{D_l}$, and $\left( \mathbf{C}_l\mathbf{z}_{l-1} \right) \odot \mathbf{z}_{l-1}$ is a quadratic term with $\mathrm{Sym}_2(\mathbb{R} ^{D_l})^{D_l}.$ This quadratic term outlines the primary functional space of the HO Block. By Definition \ref{def:leading}, a single-layer HO Block encompasses a filling functional space of degree 2.
\end{proof}

\begin{theorem}
\label{Theorem:sup_efficiency}
For an activation function with leading degree $r \geq 1$ and network architecture $\mathcal{D} =\left\{ D_1,...,D_l \right\}$, the leading functional variety of Plain Block, $\mathcal{V} _{\mathcal{D} ,r}^{P}$, HO Block, $\mathcal{V} _{\mathcal{D} ,r}^{HO}$, and Residual Block, $\mathcal{V} _{\mathcal{D} ,r}^{R}$, satisfy:
\begin{equation}
    \mathcal{V} _{\mathcal{D} ,r}^{HO} = \mathcal{V} _{\mathcal{D} ,2r}^{R} = \mathcal{V} _{\mathcal{D} ,2r}^{P}.
\end{equation}
\end{theorem}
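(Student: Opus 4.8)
The plan is to prove the two equalities $\mathcal{V}^{HO}_{\mathcal{D},r} = \mathcal{V}^{P}_{\mathcal{D},2r}$ and $\mathcal{V}^{R}_{\mathcal{D},2r} = \mathcal{V}^{P}_{\mathcal{D},2r}$ separately, by comparing the leading degrees of the three block types and showing that, for sufficiently wide architectures, all three leading functional spaces fill the \emph{same} ambient space of homogeneous polynomials. The variety equality then follows immediately upon taking Zariski closures.

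First I would dispatch the easy equality $\mathcal{V}^{R}_{\mathcal{D},2r} = \mathcal{V}^{P}_{\mathcal{D},2r}$. The residual block $\varphi((\mathbf{I}+\mathbf{C}_l)\mathbf{z}_{l-1})$ and the plain block $\varphi(\mathbf{C}_l\mathbf{z}_{l-1})$ differ only by the reparametrization $\mathbf{C}_l \mapsto \mathbf{I}+\mathbf{C}_l$. Since $\mathbf{C}_l$ ranges over all linear maps, so does $\mathbf{I}+\mathbf{C}_l$, and the two layers realize identical families of functions; hence $\mathcal{F}^{R}_{\mathcal{D},2r}$ and $\mathcal{F}^{P}_{\mathcal{D},2r}$ coincide, and so do their closures.

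The substantive equality is $\mathcal{V}^{HO}_{\mathcal{D},r} = \mathcal{V}^{P}_{\mathcal{D},2r}$, which I would prove by induction on the depth $l$. The key structural observation, already visible in Proposition \ref{pp:sup_single_layer_QRes}, is that the argument fed into the activation of an HO layer, $(\mathbf{J}+\mathbf{C}_l\mathbf{z}_{l-1})\odot\mathbf{z}_{l-1}$, has top-degree part equal to the quadratic $(\mathbf{C}_l\mathbf{z}_{l-1})\odot\mathbf{z}_{l-1}$. Applying an activation of leading degree $r$ to this quadratic argument therefore produces a leading homogeneous part of degree $2r$, exactly the degree that a plain layer with an activation of degree $2r$ contributes. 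Propagating this through $l-1$ layers, the leading degree of the HO block with activation degree $r$ is $(2r)^{l-1}$, matching the plain block with activation degree $2r$ by Definition \ref{def:leading}. The base case ($r=1$, single layer) is precisely Proposition \ref{pp:sup_single_layer_QRes}, which I would extend to arbitrary $r$ by replacing the quadratic leading term with its degree-$2r$ image under the leading part of $\varphi$; the inductive step composes one further HO layer and invokes the same degree-doubling to preserve the match with the degree-$2r$ plain block.

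The hard part will be verifying that the two leading functional spaces fill the \emph{same} ambient space $\mathrm{Sym}_{(2r)^{l-1}}(\mathbb{R}^{D_0})^{D_l}$, and not merely that they share a degree. The obstruction is that the HO block's leading quadratic is not a generic vector of quadratic forms: its $i$-th component has the constrained shape $z_i\,\ell_i(\mathbf{z})$, a coordinate times a linear form, rather than a free element of $\mathrm{Sym}_2$. I would resolve this with a genericity and dimension argument showing that, once the free linear maps $\mathbf{C}_l$ are interleaved between layers and the width is large enough, the composed parametrization is \emph{dominant} onto the full symmetric power, so that the Zariski closure of its image is the entire ambient space, exactly as for the plain degree-$2r$ block. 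This dominance claim, which generalizes the single-layer filling statement of Proposition \ref{pp:sup_single_layer_QRes} to arbitrary depth and leading degree, is the crux of the argument; once it is secured, taking closures yields $\mathcal{V}^{HO}_{\mathcal{D},r} = \mathcal{V}^{P}_{\mathcal{D},2r}$, and combining with the residual-equals-plain step completes the chain of equalities.
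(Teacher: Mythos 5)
Your proposal has the same skeleton as the paper's proof: reduce to the single-layer statement (Proposition \ref{pp:sup_single_layer_QRes}) that a linearly activated HO layer fills degree two, let an activation of leading degree $r$ double this to $2r$, propagate through the layers, and identify the residual case with the plain case. Your treatment of the residual block is in fact cleaner than the paper's: you note that $\mathbf{C}_l\mapsto\mathbf{I}+\mathbf{C}_l$ is a bijective reparametrization of the layer family, so the plain and residual families of functions literally coincide, whereas the paper only remarks that both single-layer blocks are filling. The real divergence is where the filling burden lands. The paper stays layer-local: it asserts $\mathcal{V}^{HO}_{(D_i),1}=\mathrm{Sym}_2(\mathbb{R}^{D_i})^{D_i}=\mathcal{V}^{P}_{(D_i),2}=\mathcal{V}^{R}_{(D_i),2}$ for each layer $i$, upgrades to degree $r$ via $\mathcal{V}^{HO}_{(D_i),r}=\bigl(\mathcal{V}^{HO}_{(D_i),1}\bigr)^{\otimes r}=\mathcal{V}^{P}_{(D_i),2r}$, and then composes the layer-wise equalities. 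You instead make the crux a global claim --- dominance of the entire composed parametrization onto $\mathrm{Sym}_{(2r)^{l-1}}(\mathbb{R}^{D_0})^{D_l}$ --- and you never discharge it: ``I would resolve this with a genericity and dimension argument'' is a promissory note, not an argument. As submitted, that is a genuine gap, and it sits exactly at the step you yourself identify as the hard part.

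That said, the obstruction you isolate is real, and it cuts against the paper's Proposition \ref{pp:sup_single_layer_QRes} just as much as against your induction. The $i$-th component of $(\mathbf{C}_l\mathbf{z})\odot\mathbf{z}$ is $z_i\,\ell_i(\mathbf{z})$ with $\ell_i$ linear, so as $\mathbf{C}_l$ varies these components sweep out only the $D_i$-dimensional span of $\{z_iz_j\}_j$ inside the $D_i(D_i+1)/2$-dimensional space $\mathrm{Sym}_2(\mathbb{R}^{D_i})$; a linear subspace is already Zariski closed, so the single-layer HO variety is a proper subvariety of $\mathrm{Sym}_2(\mathbb{R}^{D_i})^{D_i}$, not all of it. The paper's proposition asserts filling without confronting this, and its layer-wise chain of equalities inherits the hole; your plan fails where you predicted it would, while the paper's proof fails at the same place without acknowledging it. To genuinely close the theorem one must either characterize the subvariety generated by the HO quadratics and show that interleaving the free linear maps of subsequent layers recovers, after taking closures, everything the degree-$2r$ plain block produces, or else weaken the filling claim to one that the constrained quadratics actually satisfy. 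Neither your proposal nor the paper's proof currently does this, so you should not regard the paper's argument as the missing justification for your dominance step.
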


\begin{proof}
This can be proven by discussing the equivalence of functional space for every Block using Proposition \ref{pp:sup_single_layer_QRes}. For the $i$-th layer in the HO Block, $i=1, 2, ..., l$, before applying nonlinear activation, it has $\mathcal{V} _{(D_i),1}^{HO}=\mathrm{sym}_2(\mathbb{R} ^{D_i})^{D_i}=\mathcal{V} _{(D_i),2}^{P}=\mathcal{V} _{(D_i),2}^{R}$ (since a single-layer Blcok with polynomial activation of degree $2$ has a filling functional space of degree $2$). This proves the case for $r=1$. For nonlinear activations of leading degree $r$, applying the activation function to the space $\mathcal{V} _{(D_i),1}^{HO}$, we obtain: $\mathcal{V} _{(D_i),r}^{HO}=\left( \mathcal{V} _{(D_i),1}^{HO} \right) ^{\otimes r}=\left( \mathcal{V} _{(D_i),2}^{P} \right) ^{\otimes r}=\mathcal{V} _{(D_i),2r}^{P}
$, where $\otimes$ denotes Kronecker product. Since the relation applies to each layer, thus we have $\mathcal{V} _{\mathcal{D},r}^{HO}=\mathcal{V} _{\mathcal{D},2r}^{P}=\mathcal{V} _{\mathcal{D},2r}^{R}.$
\end{proof}

\begin{figure*}[!t]
    \centering
    \subfloat[Reconstruction error.]
    {\includegraphics[width=0.68\textwidth]{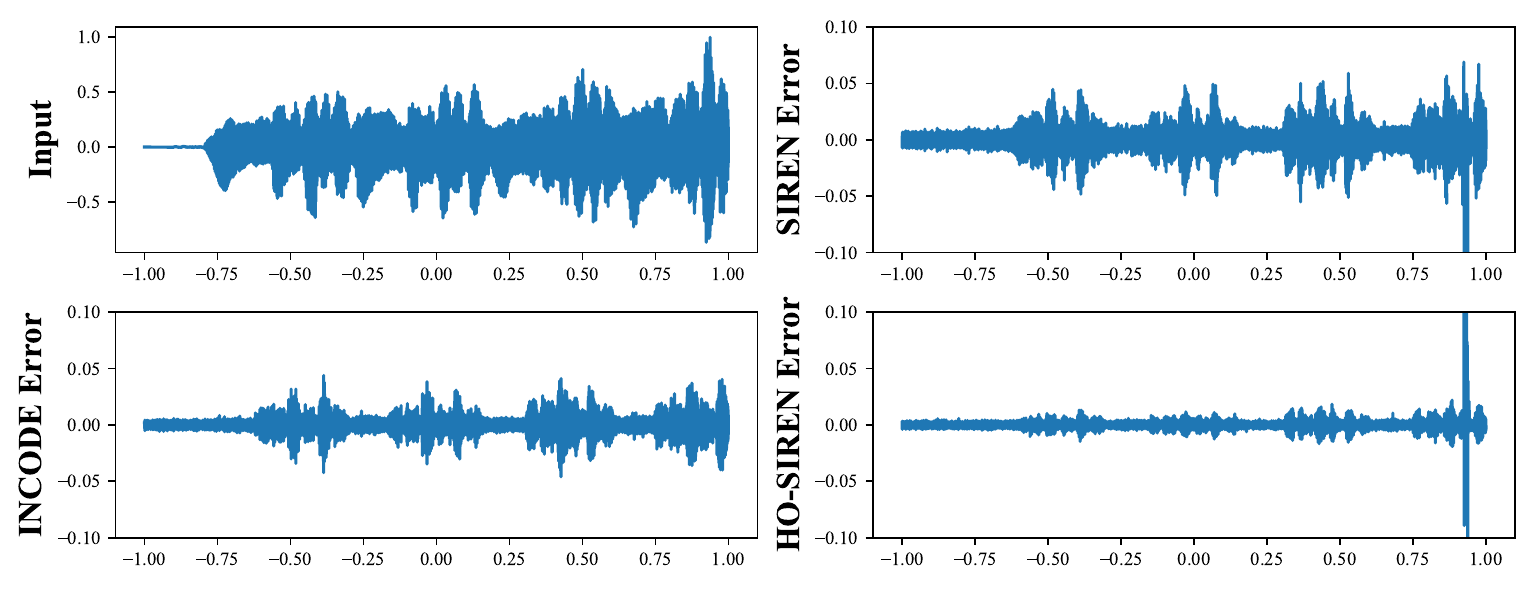}}
    \subfloat[Reconstructed PSNR.]
    {\includegraphics[width=0.32\textwidth]{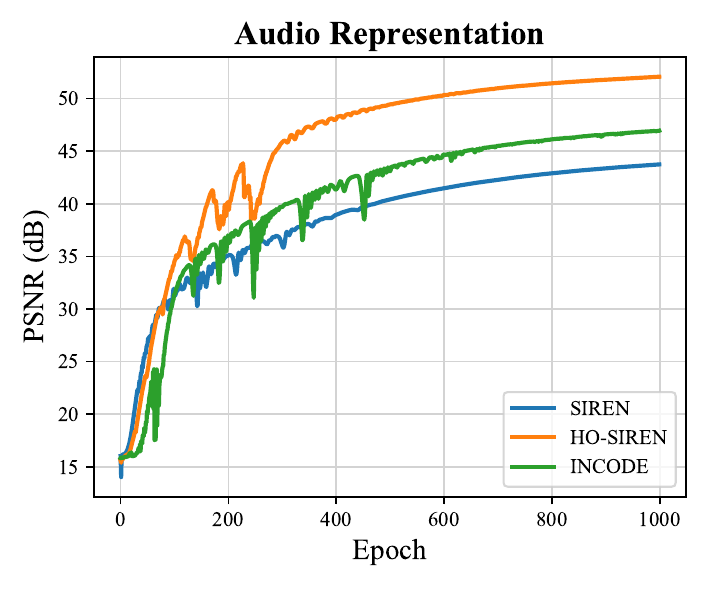}}
\caption{Results of the audio representation.  (a) Reconstruction error. (b) Reconstructed PSNR. HO-SIREN excels by minimizing reconstruction errors and demonstrating rapid convergence.}
\Description{}
\label{fig: sup_audio}
  \vspace{-3mm}
\end{figure*}

\begin{figure*}[!t]
	\centering
 \vspace{-4mm}
 	\includegraphics[width=1\textwidth]{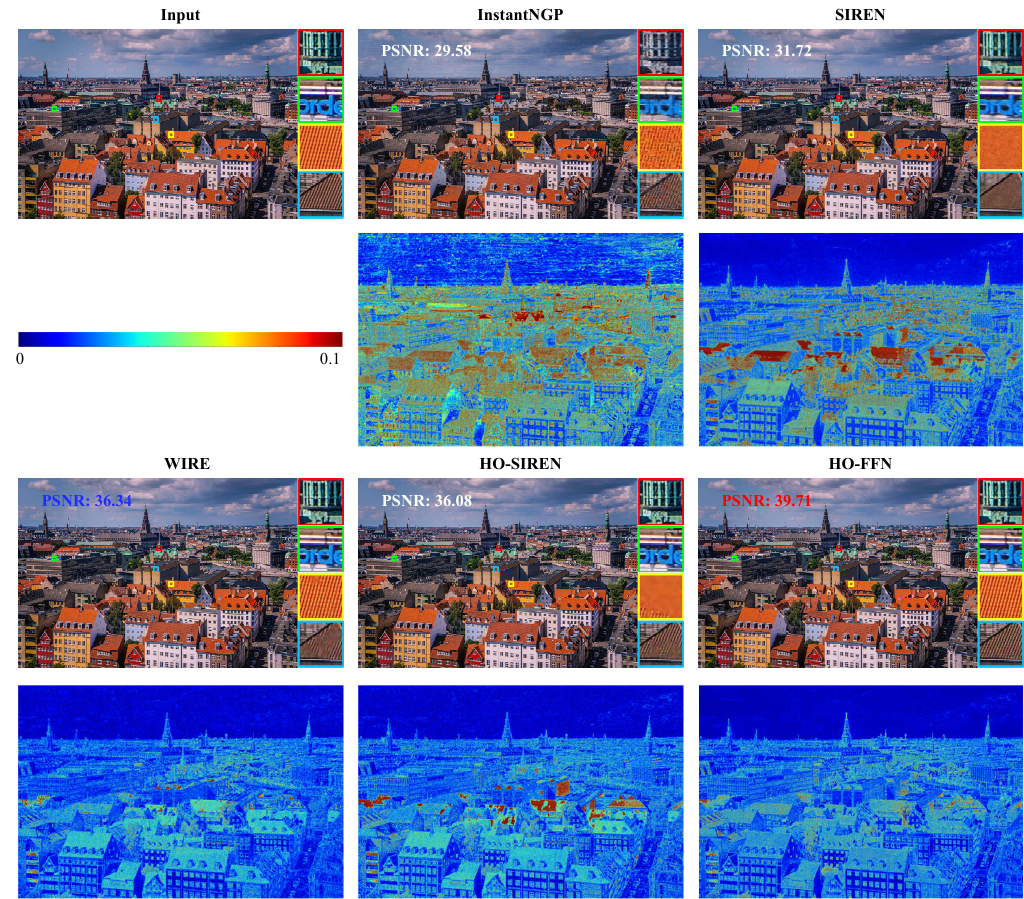}
   \vspace{-5mm}
	\caption{Image representation results. The second row is an error map. The darker the red color, the higher the error. For PSNR, red is the best, and blue is the second best. HO-FFN is the best representation result.}
	\label{fig:sup_large_image}
 \Description{}
\vspace{-4mm}
\end{figure*}

Theorem \ref{Theorem:sup_efficiency} posits that within an identical network architecture, HO block with a leading degree of $r$ and plain and residual block with a leading degree of $2r$ exhibit the same variation in homogeneous polynomial functions. This implies that the HO block can support a diversity of leading functions, specifically $(2r)^{l-1}$ homogeneous polynomials, in contrast to neural networks of the same structure and activation function, which are limited to $r^{l-1}$. Consequently, the HO block possesses a more expansive expression space, representing a broader range of frequency signal components.

Besides, we give the frequency decay rates for different blocks as follows:
\begin{theorem} 
\label{Theorem:sup_frequency decay}
For a single-layer MLP, the number of neurons is $d$. For frequency $k$, the frequency decay rate of the Plain and Residual Block is $k^{-d}$, while on the contrary, the frequency decay rate of the HO Block is $k^{{-d}/2}$.
\end{theorem}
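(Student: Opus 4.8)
The plan is to treat each block, for a single layer with $d$ neurons, as a scalar function $f=\varphi\circ p$ of the input coordinate, and to track the asymptotic decay of its Fourier transform $\hat{f}(k)$ as $|k|\to\infty$, since this is exactly the ``frequency decay rate'' the statement refers to. The one structural fact I would lean on is the derivative table (Table~\ref{tab: High-Order Derivative}): for the Plain and Residual blocks the pre-activation $p$ is affine in $\mathbf{z}_{l-1}$, so $\Delta\mathbf{z}_l=\mathbf{O}$, whereas the Hadamard product in the HO block makes $p$ a genuine quadratic with $\Delta\mathbf{z}_l=2\mathbf{C}_l\neq\mathbf{O}$. My whole argument hinges on this gap in the polynomial degree of the phase, which I expect to produce the two distinct exponents.

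First I would write the activation through its spectral representation $\varphi(u)=\int\hat{\varphi}(\xi)\,e^{i\xi u}\,d\xi$, so that $\hat{f}(k)=\int\hat{\varphi}(\xi)\left[\int e^{\,i(\xi p(x)-k\cdot x)}\,dx\right]d\xi$. For the Plain and Residual blocks $p$ is affine, the inner integral collapses to a delta pinning the input frequency to a single ridge, and the decay of $\hat{f}(k)$ is then inherited from the intrinsic smoothness of the affine-activated map; aggregating the contributions of the $d$ neurons yields the baseline rate $k^{-d}$, which I would establish by the standard integration-by-parts (smoothness $\Rightarrow$ decay) correspondence, consistent with the known spectral-bias estimates of \cite{rahaman2019spectral}. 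For the HO block $p$ is quadratic, so the inner oscillatory integral carries a quadratic phase; evaluating it by stationary phase, i.e.\ a Gaussian/Hessian integral in the relevant variables, produces the characteristic factor $|\xi|^{-d/2}$. This $-\tfrac12$ per-coordinate gain is precisely the mechanism that halves the exponent and gives the slower decay $k^{-d/2}$.

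I would then assemble and compare the two estimates, noting that the slower HO decay is exactly why it retains spectral mass at high $k$ that the affine blocks suppress, matching the paper's narrative on alleviating spectral bias. As a consistency check I would verify that the non-vanishing constant second derivative of the HO block is what makes the quadratic phase nondegenerate and the stationary-phase estimate valid, while the vanishing second derivative of the Plain and Residual blocks is what forces the faster $k^{-d}$ rate.

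The hard part will be making the dependence on the neuron count $d$ rigorous rather than dimensional bookkeeping: the stationary-phase factor $|\xi|^{-d/2}$ naturally records the dimension of the oscillatory integral, so I must carefully reconcile ``number of neurons'' with the number of directions over which the Hessian of $p$ is nondegenerate, and control the outer $\xi$-integral against $\hat{\varphi}$ uniformly, including the degenerate locus where the quadratic Hessian is singular. A secondary difficulty is that activations such as ReLU have $\hat{\varphi}$ only as a tempered distribution, so the naive Fourier inversion above would need to be replaced by a regularized integration-by-parts argument before the stationary-phase estimate can be applied.
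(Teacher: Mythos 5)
Your proposal diverges from the paper in a way that matters: the paper does not prove Theorem~\ref{Theorem:sup_frequency decay} analytically at all — its proof is a two-line citation, deferring the Plain/Residual rate to \cite{belfer2021spectral} and the HO rate to \cite{choraria2022spectral}. Both of those are neural-tangent-kernel results: the ``frequency decay rate'' being asserted is the decay of the NTK eigenvalues in a frequency (spherical-harmonic) basis, i.e.\ the quantity that governs how fast gradient descent fits the frequency-$k$ component of a target, consistent with the role this theorem plays alongside Section~\ref{sec:Neural Tangent Kernel}. You instead analyze the decay of the Fourier transform $\hat{f}(k)$ of the function realized by one block at a fixed parameter. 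These are different objects — a network whose realized functions carry substantial high-frequency spectral mass can still have an NTK whose high-frequency eigenvalues are tiny — so even a fully rigorous version of your argument would establish a statement about representable spectra, not the learning-rate decay the paper is invoking.

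Moreover, the central step of your plan fails on its own terms. For the Plain/Residual block, the single-layer output is a sum of $d$ ridge functions $\varphi(w_i^{\top}x + b_i)$; the Fourier transform of each is concentrated on the line spanned by $w_i$, with decay along that line set solely by the smoothness of $\varphi$ (e.g.\ on the order of $|k|^{-2}$ for ReLU). Summing $d$ such terms does not compound decay rates — exponents do not accumulate under addition of functions — so there is no mechanism in your argument that can produce $k^{-d}$ with $d$ equal to the neuron count. Symmetrically, the stationary-phase factor $|\xi|^{-n/2}$ for the quadratic pre-activation counts the number of integration variables, i.e.\ the input dimension, not the number of neurons. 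You flag reconciling ``number of neurons'' with the Hessian rank as a secondary difficulty, but it is actually fatal: in the works the paper cites, the exponent is the input dimension (data on the sphere $S^{d-1}$), and the theorem's phrase ``number of neurons'' is best read as a loose paraphrase of that. A genuine self-contained proof would have to reproduce the NTK eigenvalue computations of \cite{belfer2021spectral} (Gegenbauer/arc-cosine kernel expansions for affine layers) and \cite{choraria2022spectral} (their analogue for Hadamard-product layers), which is an entirely different calculation from the oscillatory-integral analysis you outline; the derivative table (Table~\ref{tab: High-Order Derivative}) motivates the gap between the blocks but cannot by itself carry either exponent.
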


\begin{proof}
    For Plain and Residual Block,  the frequency attenuation rate and proof are shown in \cite{belfer2021spectral}. For HO Blcok, the frequency attenuation rate and proof are shown in \cite{choraria2022spectral}.
\end{proof}

\subsection{Derivative Analysis}\label{sec:sup_High-Order Derivative}
The first and second derivatives are pivotal for encapsulating rich high-frequency information in signal processing. Spectral bias often arises because high-order derivatives gravitate towards zero during the learning process with plain and residual blocks, leading to a substantial loss of high-frequency details. The HO block is ingeniously designed to mitigate this issue. 

Consider a HO Block in \ref{eq:4}, the first derivative of the HO block $\nabla \mathbf{z}_l$ is
as follows

\begin{equation}
\begin{aligned}
	\nabla \mathbf{z}_l&=\frac{\partial \mathbf{z}_l}{\partial \mathbf{z}_{l-1}}\\
	&=\partial \mathbf{z}_{l-1}\left( \left( \mathbf{J}+\mathbf{C}_l\mathbf{z}_{l-1} \right) \odot \mathbf{z}_{l-1} \right)\\
	&=\left( \partial \mathbf{z}_{l-1}\left( \mathbf{J}+\mathbf{C}_l\mathbf{z}_{l-1} \right) \right) \odot \mathbf{z}_{l-1}+\left( \mathbf{J}+\mathbf{C}_l\mathbf{z}_{l-1} \right) \odot \left( \partial \mathbf{z}_{l-1}\left( \mathbf{z}_{l-1} \right) \right)\\
	&=\mathbf{C}_l\odot \mathbf{z}_{l-1}+\mathbf{C}_l\mathbf{z}_{l-1}+\mathbf{J}.\\
\end{aligned}
\end{equation}

The second derivative of the HO Block $\Delta \mathbf{z}_l$ is

\begin{equation}
\begin{aligned}
	\Delta \mathbf{z}_l&=\nabla \left( \nabla \mathbf{z}_l \right)\\
	&=\frac{\partial \left( \nabla \mathbf{z}_l \right)}{\partial \mathbf{z}_{l-1}}\\
	&=\partial \mathbf{z}_{l-1}\left( \mathbf{C}_l\odot \mathbf{z}_{l-1}+\mathbf{C}_l\mathbf{z}_{l-1}+\mathbf{J} \right)\\
	&=\partial \mathbf{z}_{l-1}\left( \mathbf{C}_l\odot \mathbf{z}_{l-1} \right) +\partial \mathbf{z}_{l-1}\left( \mathbf{C}_l\mathbf{z}_{l-1}+\mathbf{J} \right)\\
	&=\left( \partial \mathbf{z}_{l-1}\left( \mathbf{C}_l \right) \right) \odot \mathbf{z}_{l-1}+\mathbf{C}_l\odot \left( \partial \mathbf{z}_{l-1}\left( \mathbf{z}_{l-1} \right) \right) +\mathbf{C}_l\\
	&=\mathbf{C}_l+\mathbf{C}_l\\
	&=2\mathbf{C}_l.\\
\end{aligned}
\end{equation}

The second derivative of the HO block remains constant, a property that significantly enhances its ability to capture detailed signal information. This characteristic of the HO block is instrumental in efficiently accelerating the resolution of inverse problems by preserving and leveraging high-frequency details often lost in traditional processing blocks.

\section{Additional experiments and details}\label{Additional experiments and details}

In this section, we broaden the scope of our experiments to provide a more thorough comparison between our method, HOIN, and the current state-of-the-art (SOTA) method. We show that the inherent simplicity of HOIN leads to enhanced performance, particularly in terms of expressiveness and the ability to tackle inverse problems, compared to the corresponding SOTA method. These results underscore our approach's effectiveness in extending the INR network's capabilities and enhancing its applicability across various domains. We now include additional visualizations that distinctly highlight the advantages of our method.

\subsection{Experimental details}

All implementations utilize MLP networks with three hidden layers. Our experiments use PyTorch on an Nvidia RTX 3080 Ti GPU with 12GB of RAM. We employ the Adam optimizer, complemented by a learning rate scheduler that decreases the learning rate by 0.1 upon completion of each epoch. Details on specific datasets and architectures are provided for the corresponding tasks.
\subsection{Signal Representation}

\subsubsection{\textbf{Audio}}\quad 
\newline
\textbf{Data:} We use the first 7 seconds of Bach’s Cello Suite No. 1: Prelude \cite{kazerouni2024incode}, with a sampling rate of 44100 Hz as our example for the audio representation task.
\newline
\textbf{Architecture:} All models use three hidden layers with 256 neurons per hidden layer. We set the first layer $w_0 = 10000$ for SIREN, HO-SIREN, INCODE. Each model is trained for a total of 1000 epochs.
\newline
\textbf{Analysis:} We present the audio representation visualization results and their corresponding error plots in Figure \ref{fig: sup_audio}. These visualizations are crucial for illustrating the strengths of our approach. Regarding sound playback quality, SIREN tends to introduce a distinct squeak-like sound accompanying the main audio. With INCODE, certain moments experience annoying noise, as the error chart indicates. However, HO-SIREN significantly reduces noise interference and outperforms the other methods.

\subsubsection{\textbf{Image}}\quad 
\newline
\textbf{Data:} In the main paper and supplementary material experiments, we select one of the larger nature images on the Internet with a size of $3\times4844\times3219$.
\newline
\textbf{Architecture:} For all models except InstantNGP, we use three hidden layers, each with 512 neurons. We set the first layer's frequency parameter $w_0=30$ for SIREN, HO-SIREN, and INCODE. For the Wire, we set the scaling parameter $s_0=20$. Each model is trained for a total of 1000 epochs.
\newline
\textbf{Analysis:} Figure \ref{fig:sup_large_image} displays the visualization results of large-size images. In terms of reconstruction quality, InstantNGP incorrectly represents some colors. Because of its inherent spectral bias, SIREN struggles with reconstructing high-frequency details, such as the top of the house, with distinct light and dark variations. HO-SIREN and HO-FFN excel in capturing high-frequency details and consistently maintain the best overall quality, 3 dB better than the current top-performing WIRE (SOTA) model.

\begin{table}[h]
\caption{Best 3 scores in each metric are marked with gold \tikzcircle[gold,fill=gold]{2pt}, silver \tikzcircle[silver,fill=silver]{2pt} and bronze \tikzcircle[bronze,fill=bronze]{2pt}.}
\begin{tabular}{l|ll}
\hline \hline
Methods          & Thai  & Lucy   \\
\hline
INCODE     & 0.9879      & 0.9951 \tikzcircle[gold,fill=gold]{2pt} \\
WIRE       & 0.9903 \tikzcircle[bronze,fill=bronze]{2pt}      & 0.9718 \\
SIREN      & 0.9758      & 0.9885 \\
Pos.Enc    & 0.9872      & 0.9927 \\
\hline
\multicolumn{3}{c}{\textbf{Ours} }         \\
\hline
\textbf{HO-SIREN}   & 0.9935 \tikzcircle[gold,fill=gold]{2pt}      & 0.9948 \tikzcircle[silver,fill=silver]{2pt} \\
\textbf{HO-Pos.Enc} & 0.9918 \tikzcircle[silver,fill=silver]{2pt}      & 0.9945 \tikzcircle[bronze,fill=bronze]{2pt}  \\
\hline\hline
\end{tabular}
\label{tab: sup_oc}
\vspace{-4mm}
\end{table}

\begin{figure*}[!t]
	\centering
 	\includegraphics[width=1\textwidth]{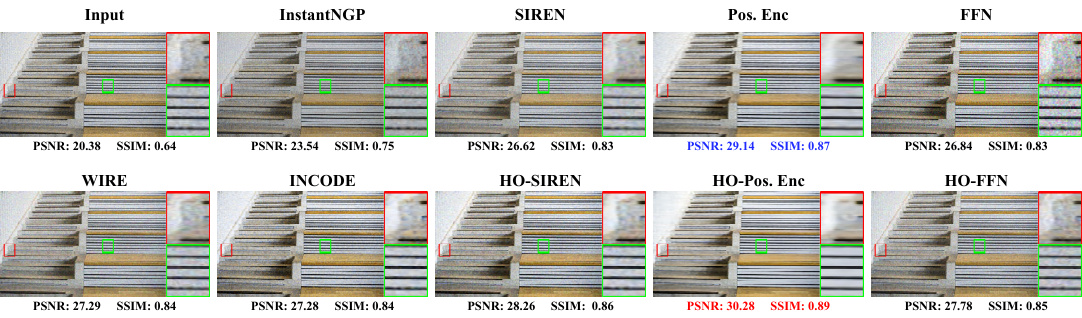}
	\caption{The result of image denoising. HO-Pos. enc maintains the best PSNR and SSIM.}
	\label{fig:sup_denoise}
 \Description{}
\end{figure*}

\begin{figure*}[!t]
	\centering
 	\includegraphics[width=1\textwidth]{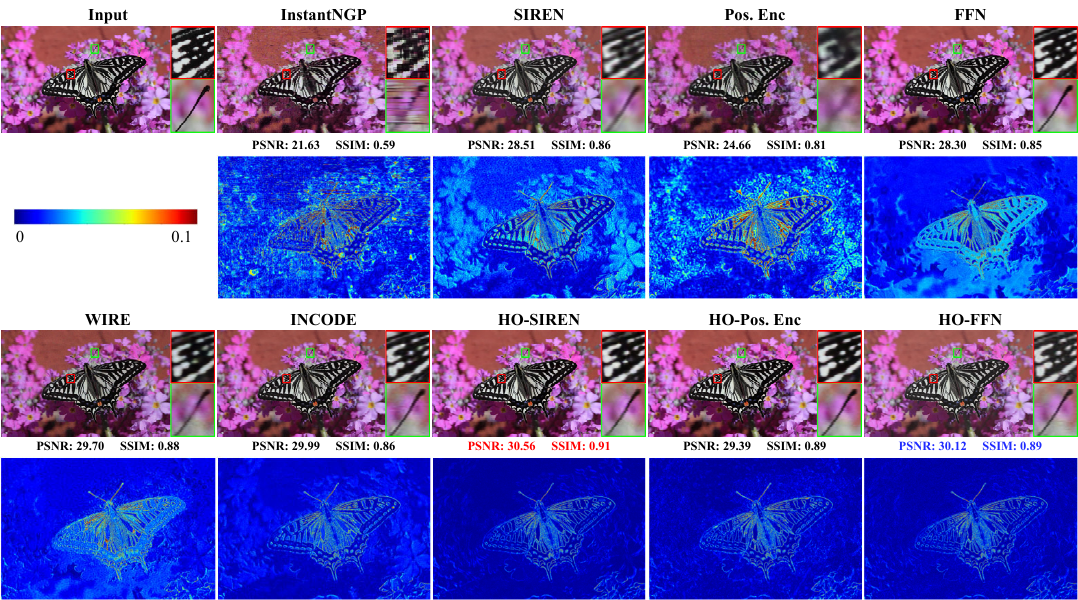}
	\caption{The result of image super-resolution. The second line is the error map. HO-SIREN maintains the best PSNR and SSIM and can accurately characterize textures, edges, and other detailed information.}
	\label{fig:sup_sup}
 \Description{}
\end{figure*}

\begin{figure*}[!t]
	\centering
 	\includegraphics[width=1\textwidth]{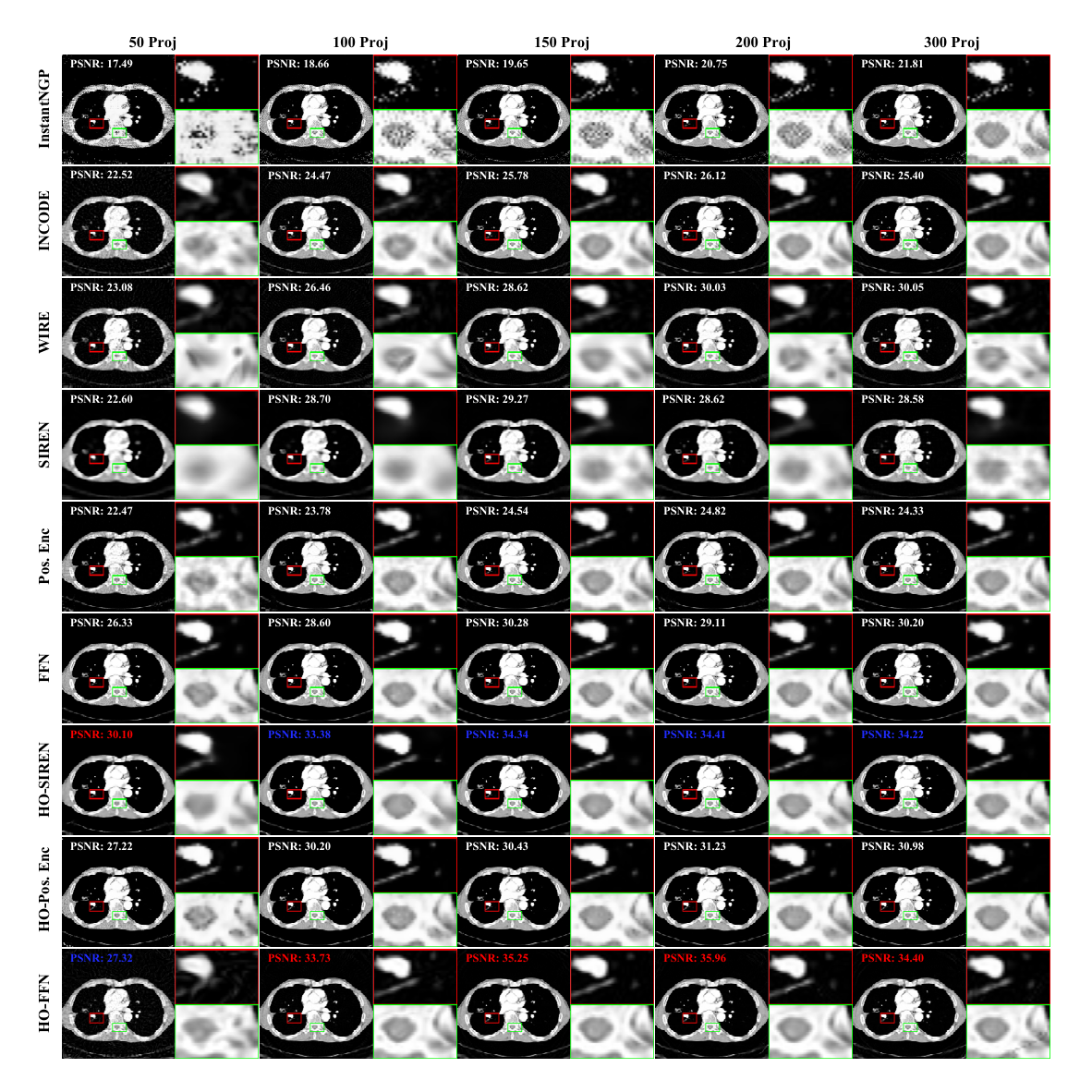}
	\caption{Reconstruction results for different projects. HO-FFN always maintains the best reconstruction results, keeping the highest PSNR and SSIM.}
	\label{fig:sup_CT}
 \Description{}
\end{figure*}

\begin{table*}[!thbp]
\caption{The result of image inpainting.}
\begin{tabular}{lllllllll}
\hline\hline
\multicolumn{1}{c}{\multirow{2}{*}{Methods}} & \multicolumn{2}{c}{20\%} & \multicolumn{2}{c}{40\%} & \multicolumn{2}{c}{60\%} & \multicolumn{2}{c}{80\%} \\
\cmidrule(lr){2-3}\cmidrule(lr){4-5}\cmidrule(lr){6-7}\cmidrule(lr){8-9}
\multicolumn{1}{c}{}                         & PSNR $\uparrow$        & SSIM $\uparrow$         & PSNR $\uparrow$        & SSIM $\uparrow$        & PSNR $\uparrow$        & SSIM $\uparrow$        & PSNR $\uparrow$        & SSIM $\uparrow$        \\
\hline
InstantNGP                                   & 20.719      & 0.734      & 23.400      & 0.829      & 25.263      & 0.871      & 26.747      & 0.896      \\
WIRE                                         & 21.118      & 0.710      & 24.015      & 0.827      & 26.073      & 0.885 \tikzcircle[bronze,fill=bronze]{2pt}      & 26.677      & 0.895      \\
INCODE                                       & 22.097 \tikzcircle[bronze,fill=bronze]{2pt}      & 0.781 \tikzcircle[silver,fill=silver]{2pt}      & 24.796      & 0.859      & 25.922      & 0.877      & 27.279 \tikzcircle[bronze,fill=bronze]{2pt}      & 0.903 \tikzcircle[bronze,fill=bronze]{2pt}      \\
SIREN                                        & 21.318      & 0.719      & 24.044      & 0.828      & 25.495      & 0.865      & 26.380      & 0.885      \\
FFN                                          & 21.862      & 0.773      & 25.127 \tikzcircle[silver,fill=silver]{2pt}      & 0.868 \tikzcircle[silver,fill=silver]{2pt}      & 27.617 \tikzcircle[silver,fill=silver]{2pt}      & 0.912 \tikzcircle[silver,fill=silver]{2pt}      & 29.754 \tikzcircle[silver,fill=silver]{2pt}      & 0.938 \tikzcircle[silver,fill=silver]{2pt}      \\
\hline
\multicolumn{9}{c}{Ours}                                                                                                                                 \\
\hline
HO-SIREN                                     & 22.119 \tikzcircle[silver,fill=silver]{2pt}      & 0.777 \tikzcircle[bronze,fill=bronze]{2pt}      & 24.961 \tikzcircle[bronze,fill=bronze]{2pt}      & 0.866 \tikzcircle[bronze,fill=bronze]{2pt}      & 26.376 \tikzcircle[bronze,fill=bronze]{2pt}      & 0.881      & 27.279      & 0.894      \\
HO-FFN                                       & 22.357 \tikzcircle[gold,fill=gold]{2pt}      & 0.802 \tikzcircle[gold,fill=gold]{2pt}      & 25.962 \tikzcircle[gold,fill=gold]{2pt}      & 0.896 \tikzcircle[gold,fill=gold]{2pt}      & 28.764 \tikzcircle[gold,fill=gold]{2pt}      & 0.936 \tikzcircle[gold,fill=gold]{2pt}      & 31.573 \tikzcircle[gold,fill=gold]{2pt}      & 0.958 \tikzcircle[gold,fill=gold]{2pt}      \\
\hline\hline
\end{tabular}
\label{tab: sup_inpainting}
\end{table*}

\subsubsection{\textbf{Occupancy Volume}}\quad 
\newline
\textbf{Data:} We use the Lucy and Thai Statue datasets from the Stanford 3D Scanning Repository and follow the WIRE strategy \cite{saragadam2023wire}. We create an occupancy volume through point sampling on a $512 \times 512 \times 512$ grid, assigning values of 1 to voxels within the object and 0 to voxels outside.
\newline
\textbf{Architecture:} Our network and training configuration is similar to the image representation task, with the difference that INR now maps 3D coordinates to signed distance function (SDF) values. Each model is trained for a total of 100 epochs.
\newline
\textbf{Analysis:} The results in Table \ref{tab: sup_oc} demonstrate the effectiveness of HO-SIREN as a formidable option for occupancy representation tasks. HO-SIREN significantly improves representation by effectively utilizing the HO Block to capture complex interactions between features. This capability is especially evident in enhancing high-frequency information while maintaining excellent capture of low-frequency details. Our approach achieves higher "Intersection over Union" (IOU) values, significantly enhancing object detail and scene complexity and rendering more accurately than existing methods.

\subsection{Inverse Problems}
\subsubsection{\textbf{Image denoising}}\quad 
\newline
\textbf{Data:} In the experiments in the main paper and supplementary materials, we employ an image from DIV2K dataset \cite{timofte2018ntire},  downsampled by a factor of $1/4$ from $1152 \times2040 \times3$ to $288 \times 510 \times 3$. we add Gaussian noise with three noise levels, including $\sigma=10$, $\sigma=25$, and $\sigma=50$.
\newline
\textbf{Architecture:} The setup for the denoising experiment closely mirrors that of the image characterization experiment, with the modification that the neurons in each model were adjusted to 256. Throughout the training process, we monitored the Peak signal-to-noise ratio (PSNR) of both the noisy and clean images, considering the peak PSNR of the clean image as the final result of the reconstruction. Each model is trained for a total of 2000 epochs.
\newline
\textbf{Analysis:} Experimental results for the three noise scales are presented in the main paper. Here, we visualize the experimental results of $\sigma=25$. As shown in Figure \ref{fig:sup_denoise}, HO-Pos.Enc substantially enhances the fidelity of noisy images, achieving a 9.9 dB improvement in PSNR and a 0.22 increase in Structural Similarity Index (SSIM). Compared to the INCODE and SIREN methods, HO-Pos.Enc more effectively reduces noise artifacts while delicately preserving image details. Furthermore, our method surpasses the Pos. Enc. method in terms of SSIM by 0.02.

\subsubsection{\textbf{Image super resolution}}\quad 
\newline
\textbf{Data:} We adopt an image from the DIV2K dataset \cite{timofte2018ntire} and downsampled the image with the size of $1356 \times 2040 \times 3$ by factors of $1/2$, $1/4$, $1/6$, and $1/8$.
\newline
\textbf{Architecture:} We maintain the same architectural and training settings as the image representation task. By employing a downsampled image during training, we exploit the interpolation capabilities of INRs to reconstruct an image of its original size in the test. Each model is trained for a total of 500 epochs.
\newline
\textbf{Analysis:} The experimental results for four upsampling factors are shown in the main paper. Here, we visualize the experimental results for one map with an upsampling factor of 4. The application of INRs as interpolators holds significant promise in super-resolution, leveraging inherent biases within INRs that can be utilized to enhance performance in such tasks. As depicted in Figure \ref{fig:sup_sup}, HO-SIREN and HO-FFN consistently achieve superior PSNR and SSIM values across various super-resolution scales, surpassing competing methods. Additionally, HO-SIREN excels in reconstructing detailed elements like high-quality and transparent textures, avoiding the background artifacts commonly associated with SIREN.

\subsubsection{\textbf{CT reconstruction}}\quad 
\newline
\textbf{Data:} Our CT image reconstruction experiment utilizes 10 CT lung images from the publicly accessible lung nodule analysis dataset on Kaggle \cite{clark2013cancer}. To assess the efficacy of our model in CT reconstruction tasks, these images are downsampled to a resolution of $256\times256$. The experiment measures reconstruction at four angles and projects: 50, 100, 200, and 300.
\newline
\textbf{Architecture:} We maintain the same architectural and training settings as the image representation task. We generate a sinogram according to the projection level using the radon transform. The model predicts a reconstructed CT image. Subsequently, we calculate the radon transform for the generated output and compute the loss function between these sinograms to guide the model toward generating CT images with reduced artifacts. Each model is trained for a total of 5000 epochs.
\newline
\textbf{Analysis:} CT reconstruction involves creating computational images from sensor measurements. In sparse CT reconstruction, the challenge is generating accurate images using only a limited subset of the available measurements, complicating the imaging process. As shown in Figure \ref{fig:sup_CT}, HOIN addresses this challenge by effectively integrating higher-order interactions between features using the HO Block. The HO-FFN leverages 100 measurements to produce a sharp reconstruction with crisp details, achieving a notable improvement of 5.12 dB over the standard FFN, thus distinguishing itself in performance. In contrast, SIREN, similar to WIRE and INCODE, exhibited artifacts. This underscores the robustness of HO-FFN in managing noisy and undersampled inverse problems, demonstrating its potential as a promising solution for constrained image reconstruction, where it effectively balances image fidelity with noise reduction.

\subsubsection{\textbf{Inpainting}}\quad 
\newline
\textbf{Data:} We utilize Celtic spiral knots image with a $572\times 582\times 3$ resolution. The sampling masks are generated randomly, with an average of 20\%, 40\%, 60\%, and 80\% of pixels being sampled.
\newline
\textbf{Architecture:} We use the same structure as the image representation. Each model is trained for a total of 500 epochs.
\newline
\textbf{Analysis:} Image inpainting poses a significant challenge, as the task requires the model to predict entire pixel values based on only a small portion of trained pixel data. The experimental results are shown in Table \ref{tab: sup_inpainting}. The high capacity of INRs offers a unique advantage in addressing this inverse problem. The strong prior embedded within the INR function space facilitates applications such as repairs from finite observations, where the model leverages its learned representation to predict and fill in missing values. As seen in other tasks, HO-FFN excels in capturing complex features, particularly edges, which allows it to outperform other methods that often yield ambiguous results.

\begin{figure}[!h]
    \centering
    \includegraphics[width=1\linewidth]{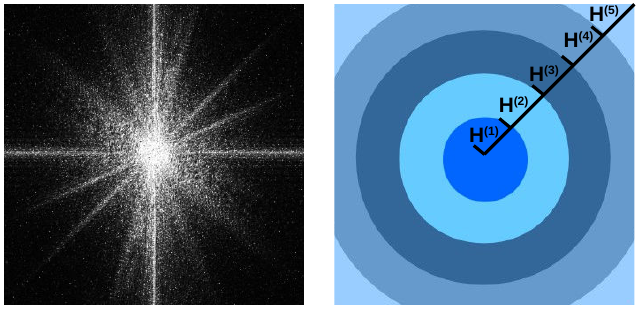}
    \caption{\textbf{Frequency-band correspondence metric.} The left image shows an example of correspondence map $H$, which is computed according to Eq. (\ref{eq:fbc}). We divide the correspondence map into $N$ subgroups corresponding to $N$ non-overlapping frequency bands. Since the correspondence map is symmetrical around the center, we group it uniformly according to the distance between its elements and center, as illustrated by the right image when $N=5$. Different colors represent different subgroups. We compute the mean correspondence for each band to transform the 2D map into the 1D one.}
    \label{exp_fig_architecture}
\label{fig:fbc}
\Description{}
\end{figure}

\begin{figure*}[!t]
	\centering
 	\includegraphics[width=1\textwidth]{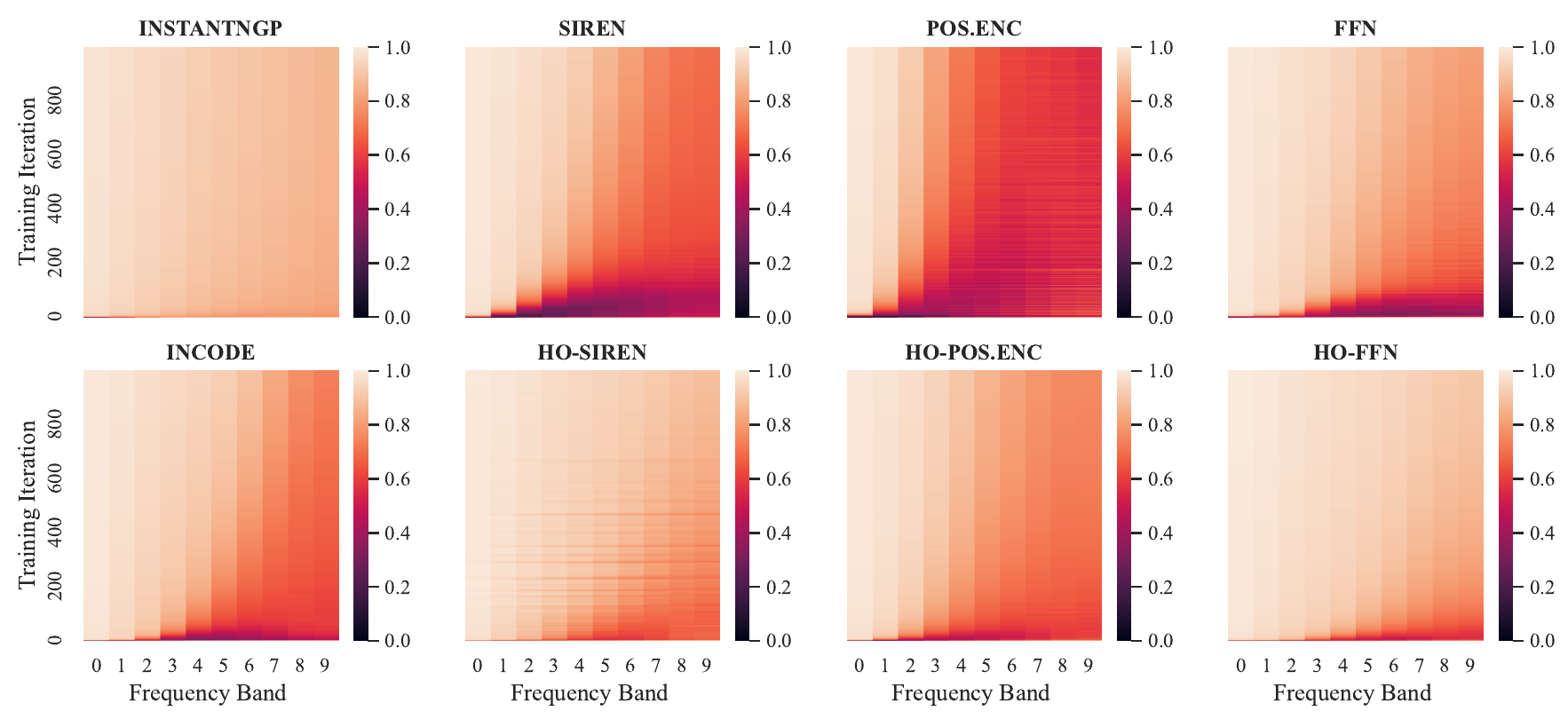}
	\caption{Comparison of learning speeds at different frequencies. The target image is transformed into 10 frequency bands through the Fourier transform (x-axis, 0 represents the lowest frequency band), and we compare the learned components with the proper amplitude. On the color chart scale, 1 represents a perfect approximation. HO block can effectively alleviate spectral bias.}
	\label{fig:sup_sb}
 \Description{}
\end{figure*}

\begin{figure*}[!t]
    \centering
    \subfloat[Noise Image PSNR]
    {\includegraphics[width=0.5\textwidth]{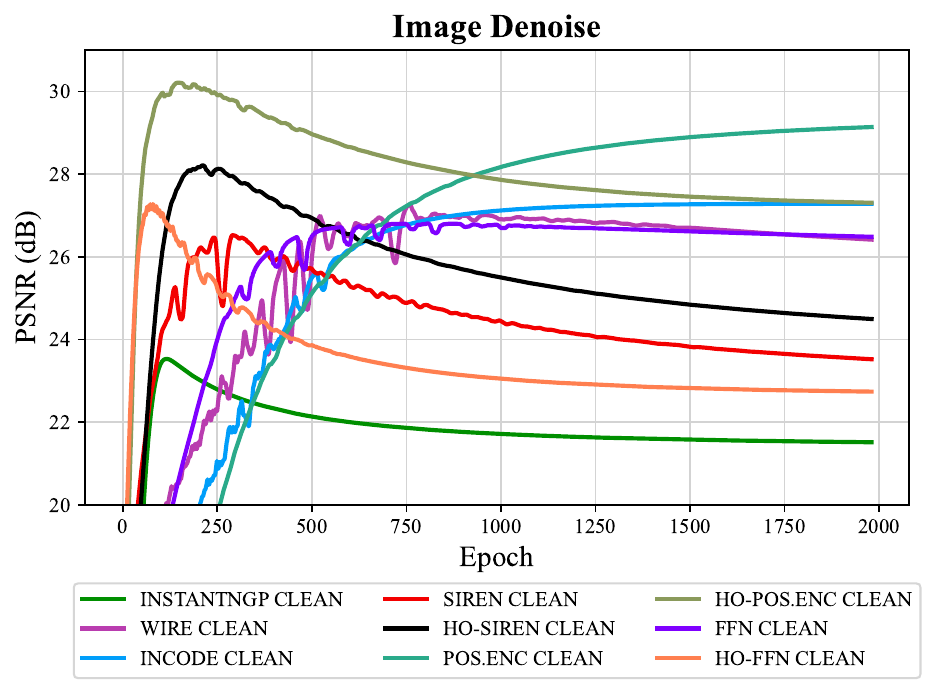}}
    \subfloat[Clean Image PSNR]
    {\includegraphics[width=0.5\textwidth]{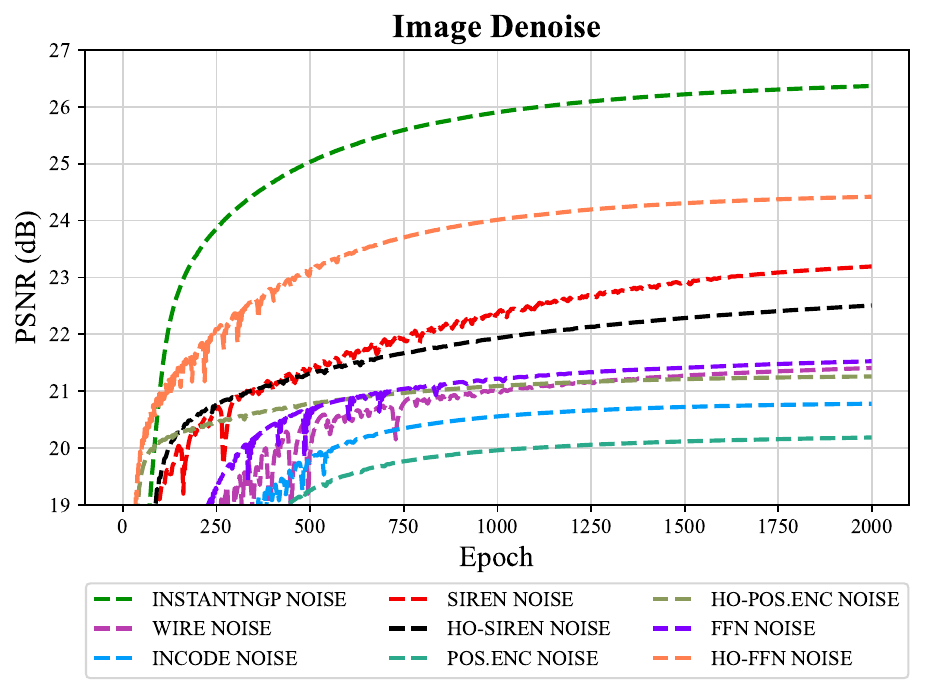}}
\caption{The PSNR learning curves for noisy and clean images are in Figure \ref{fig:sup_denoise}. HO-Pos. Enc and HO-SIREN effectively speed up the learning process compared to Pos. Enc and SIREN are reflected in both high noisy PSNR and clean PSNR.}
\Description{}
\label{fig: clean_psnr}
  \vspace{-3mm}
\end{figure*}

\begin{figure*}[!t]
	\centering
 	\includegraphics[width=1\textwidth]{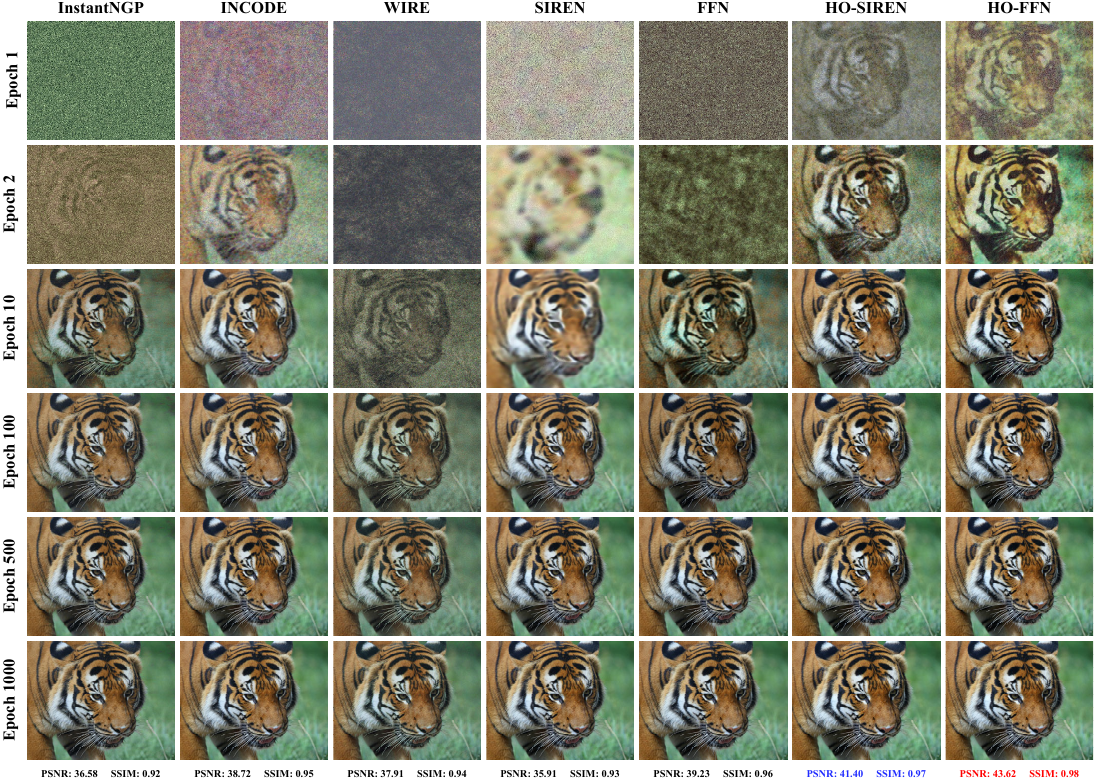}
	\caption{The results across different epochs. After just two epochs, HO-SIREN and HO-FFN can capture the outline and color features of the tiger, performing even better than InstantNGP. Throughout subsequent iterations, HO-FFN consistently maintains strong visual reconstruction results.}
	\label{fig:sup_dif_epoch}
 \Description{}
\end{figure*}

\section{Theoretical experimental verification}\label{Theoretical experimental verification}

\subsection{Convergence rate comparison}

\subsubsection{\textbf{Experimental settings}}\quad
\newline
\label{sec_fbc}
We use the band correspondence metric in \cite{shi2022measuring} to check the input-output correspondence across multiple bands in the frequency domain Multiple bands For this metric, let $\{\theta^{(1)},\dots,\theta^{(T)}\}$ denote the trajectory of $T$ steps of gradient descent in the parameter space and let $\{F_{\theta^{(1)}},\dots, F_{\theta^{(T)}}\}$ denote the corresponding trajectory in the output space. We propose to analyze the Fourier spectrum of the output images $F_{\theta^{(t)},t{=}1,\dots,T}$ to show the convergence dynamics of different frequency components of the target image.
The Fourier spectrum of the output image $F_{\theta^{(t)}}$ is obtained by the Fourier transform $\mathscr{F}$, denoted as $\mathscr{F}\{F_{\theta^{(t)}}\}$ for step $t$. We similarly compute the Fourier transform for the target image $G$, denoted as $\mathscr{F}\{G_0\}$. We then compute an element-wise correspondence between both transforms as:
\begin{equation}
H_{\theta^{(t)}}=\frac{\mathscr{F}\{F_{\theta^{(t)}}\}}{\mathscr{F}\{G_0\}}.
\label{eq:fbc}
\end{equation}
Intuitively, $H_{\theta^{(t)}}$ denotes to what extent any deep image prior at step $t$ corresponds with image $G_0$ in the frequency domain; the closer the values are to 1, the higher the correspondence.
As we are interested in the spectral bias of the deep image prior, we divide the correspondence map into $N$ subgroups corresponding to $N$ non-overlapping frequency bands. Since the correspondence map is symmetrical around the center, we group it uniformly according to the distance between its elements and its center, as illustrated in Figure \ref{fig:fbc}. To transform the 2D map to the 1D one, we compute the mean correspondence for each band, denoted as $\bar{H}_{\theta^{(t)}}^{(n)}$, with $n{=}1, \dots, N$. The value of $\bar{H}_{\theta^{(t)}}^{(n)}$ indicates the convergence dynamics of different frequency components of a target image.

\subsubsection{\textbf{Experimental results}}\quad
\newline
This section analyzes various models' spectral bias and convergence speed, including InstantNGP, INCODE, SIREN, and Pos. Enc, FFN, HO-SIREN, HO-Pos. Enc, and HO-FFN. We use the configuration in section \ref{sec_fbc}. We conducted an image representation experiment where, as described in the main paper, the image is divided into 10 frequency bands from low to high. The metric is the ratio of the learned frequency band values to the true image for each epoch.

As depicted in Figure \ref{fig:sup_sb}, the darker the red color, the less information is learned in that frequency band. Models like SIREN, Pos. Enc. and FFN struggle to learn high-frequency information in the early stages of training. However, introducing the HO Block significantly enhances the model's perception of high-frequency information. Among these, HO-SIREN and HO-FFN consistently achieve the best results. But for InstantNGP, the approach involves dividing the image into countless grids and simultaneously fitting the values at these grid points, effectively learning low-frequency and high-frequency information simultaneously. This method contributes to its rapid fitting capabilities.

We also visualize the results across different epochs. After just two epochs, HO-SIREN and HO-FFN could capture the outline and color features of the tiger, performing even better than InstantNGP. Throughout subsequent iterations, HO-FFN consistently maintained strong visual reconstruction results.

\subsection{Spectral bias in inverse tasks}
In this section, we address the application of mitigating spectral bias in inverse problems. Properly reducing spectral bias to enhance the perception of high-frequency information can accelerate the resolution of inverse problems. However, excessive acceleration might lead to premature coupling of high-frequency noise with the signal's high-frequency information, complicating the resolution of inverse problems. Using image denoising as an example, we visualize the PSNR learning curves for noisy and clean images in Figure \ref{fig: clean_psnr}. HO-Pos. Enc and HO-SIREN effectively speed up the learning process compared to Pos. Enc and SIREN are reflected in both high noisy PSNR and clean PSNR. Conversely, InstantNGP and HO-FFN, due to their overly aggressive mitigation of spectral bias, experience coupling of noise and signal at high frequencies, which is detrimental for image denoising tasks.

For the representation task, where excessive mitigation of spectral bias is not a concern, HO-FFN emerges as the top-performing model. In denoising tasks, HO-Pos.Enc strikes the best balance and proves to be the most effective model. For tasks involving super-resolution, CT reconstruction, and inpainting, both HO-SIREN and HO-FFN stand out as the best models.

\end{document}